\newtheorem{definition}{Definition}
\newtheorem{theorem}{Theorem}[section]
\newtheorem{lemma}[theorem]{Lemma}  % 让 Lemma 跟 theorem 共享编号
\title{Synthetic Forgetting without Access: A Few-shot Zero-glance Framework for Machine Unlearning}
\author{
    %Authors
    % All authors must be in the same font size and format.
    Qipeng Song\textsuperscript{\rm 1},
    Nan Yang\textsuperscript{\rm 1},
    Ziqi Xu\textsuperscript{\rm 2},
    Yue Li\textsuperscript{\rm 1}\thanks{Corresponding author},
    Wei Shao\textsuperscript{\rm 3},
    Feng Xia\textsuperscript{\rm 2}
    % Written by AAAI Press Staff\textsuperscript{\rm 1}\thanks{With help from the AAAI Publications Committee.}\\
    % AAAI Style Contributions by Pater Patel Schneider,
    % Sunil Issar,\\
    % J. Scott Penberthy,
    % George Ferguson,
    % Hans Guesgen,
    % Francisco Cruz\equalcontrib,
    % Marc Pujol-Gonzalez\equalcontrib
}
\title{My Publication Title --- Single Author}
\author {
    Author Name
}
\title{My Publication Title --- Multiple Authors}
\author {
    % Authors
    First Author Name\textsuperscript{\rm 1,\rm 2},
    Second Author Name\textsuperscript{\rm 2},
    Third Author Name\textsuperscript{\rm 1}
}
\begin{document}

\maketitle

\begin{abstract}
Machine unlearning aims to eliminate the influence of specific data from trained models to ensure privacy compliance. However, most existing methods assume full access to the original training dataset, which is often impractical. We address a more realistic yet challenging setting: \textit{few-shot zero-glance}, where only a small subset of the retained data is available and the forget set is entirely inaccessible. We introduce GFOES, a novel framework comprising a Generative Feedback Network (GFN) and a two-phase fine-tuning procedure. GFN synthesises Optimal Erasure Samples (OES), which induce high loss on target classes, enabling the model to forget class-specific knowledge without access to the original forget data, while preserving performance on retained classes. The two-phase fine-tuning procedure enables aggressive forgetting in the first phase, followed by utility restoration in the second. Experiments on three image classification datasets demonstrate that GFOES achieves effective forgetting at both logit and representation levels, while maintaining strong performance using only 5\% of the original data. Our framework offers a practical and scalable solution for privacy-preserving machine learning under data-constrained conditions.
\end{abstract}

% Uncomment the following to link to your code, datasets, an extended version or similar.
% You must keep this block between (not within) the abstract and the main body of the paper.
% \begin{links}
%     \link{Code}{https://github.com/GFOES/GFOES-AAAI2026}
%     % \link{Datasets}{https://aaai.org/example/datasets}
%     % \link{Extended version}{https://aaai.org/example/extended-version}
% \end{links}

\section{Introduction}
\label{sec1}

In recent years, growing awareness of personal data rights has led to the introduction of various privacy regulations, including the General Data Protection Regulation (GDPR)~\cite{voigt2017eu}, the California Consumer Privacy Act (CCPA)~\cite{goldman2020introduction}, and China’s Personal Information Protection Law (PIPL)~\cite{calzada2022citizens}. These laws enshrine the \textit{Right to be Forgotten}~\cite{villaronga2018humans}, which grants individuals the right to request data deletion. For Machine Learning as a Service (MLaaS) providers, complying with this right requires not only permanently deleting personal data from storage systems, but also removing any knowledge derived from it and embedded in trained models. This challenge has given rise to a new research direction: \textit{machine unlearning}~\cite{cao2015towards}.

% Regulations like GDPR~\cite{voigt2017eu}, CCPA~\cite{goldman2020introduction}, all of which incorporate the pivotal \textit{Right to be Forgotten}. This right mandates that Machine Learning as a Service (MLaaS) providers not only delete personal data from storage but, more challengingly, also remove any knowledge derived from it that is embedded in their trained models. Addressing this critical compliance requirement has led to the emergence of \textit{machine unlearning}~\cite{cao2015towards} as a vital research area.

\begin{figure}[t]  
    \centering
    \includegraphics[width=0.46\textwidth]{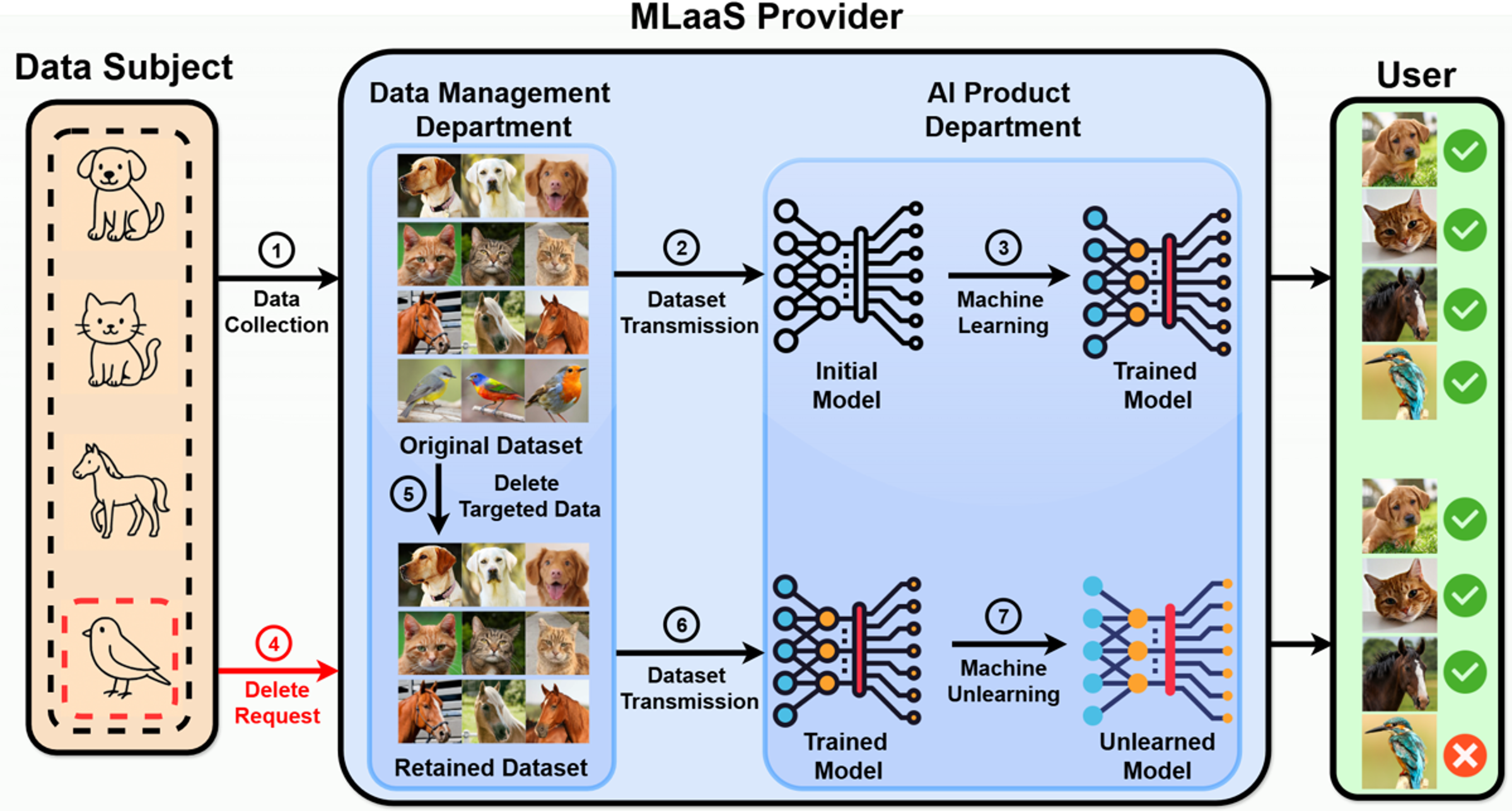}
    \caption{Overview of how an MLaaS provider processes data deletion requests under the \textit{Right to be Forgotten}. After data collection (\ding{172}) and initial model training (\ding{173}–\ding{174}), a deletion request (\ding{175}) triggers removal of the specified data (e.g., the bird image) from storage (\ding{176}). The updated dataset is used for unlearning (\ding{177}–\ding{178}), and the resulting model is then served to users.}
    % \caption{An overview of how an MLaaS provider handles user data deletion requests in compliance with the \textit{Right to be Forgotten}. The process begins with data collection (\ding{172}), followed by model training using the full dataset (\ding{173}–\ding{174}). Upon receiving a deletion request (\ding{175}), the data management department removes the specified data (e.g., the bird image) from storage (\ding{176}). This updated dataset is then used for machine unlearning (\ding{177}–\ding{178}), resulting in an unlearned model. The updated model is subsequently served to users.}
    \label{Fig0}
\end{figure}

To enforce the \textit{Right to be Forgotten}, MLaaS providers must ensure that both user data and its learned influence are thoroughly removed. As illustrated in Figure~\ref{Fig0}, the unlearning process involves two key objectives: (i) deleting the user's data from storage, and (ii) removing its impact from the trained model. Importantly, data deletion must precede unlearning to avoid unauthorised retention or reprocessing. This order aligns with Article 17(1) of the GDPR, which requires that personal data be erased \textit{without undue delay} once conditions such as consent withdrawal or data irrelevance are met. Performing unlearning before deletion risks caching or duplicating the data during intermediate steps, thereby violating GDPR principles like storage limitation and potentially leading to unlawful processing.

From both legal and technical perspectives, it is therefore essential that unlearning is performed \textit{after} irreversible data deletion. This order ensures that the learning system cannot access or replicate deleted data, thereby reinforcing privacy-preserving compliance and fostering trust in responsible AI systems. Furthermore, MLaaS providers are economically motivated to minimise the amount of data used in the unlearning process, as such compliance efforts incur operational costs without generating direct revenue. These constraints give rise to a new and practical unlearning setting, which we refer to as the \textit{few-shot zero-glance}: \textit{few-shot} reflects the minimal reliance on retained training data, while \textit{zero-glance} denotes the complete lack of access to the deleted data throughout the unlearning process.

Despite rapid progress in machine unlearning, existing methods fall short of meeting the stringent demands of the \textit{few-shot zero-glance} setting. Prior work generally falls into two categories: exact unlearning~\cite{wu2020deltagrad,yan2022arcane} and approximate unlearning~\cite{liu2022backdoor,shaik2023exploring,tanno2022repairing}. However, most methods in both categories assume full access to either the entire training set or the data to be forgotten. While a few methods~\cite{GuoGHM20} only consider the \textit{few-shot} setting, they still require access to the target data during unlearning and thus violate the \textit{zero-glance} setting. This highlights an urgent need for unlearning strategies that enforce both data minimisation and strict non-access guarantees, while preserving strong predictive performance.

To tackle the challenges of the \textit{few-shot zero-glance} setting, we propose a novel unlearning framework that removes class-specific knowledge without accessing the forget set. We first train a Generative Feedback Network (GFN) on a small subset of retained data to generate Optimal Erasure Samples (OES), which are synthetic instances labelled as the target class. These samples are crafted to interfere with forgotten-class knowledge while preserving the decision boundaries of retained classes. The GFN is trained with a stabilised joint objective that promotes forgetting through gradient ascent on target-class predictions and encourages retention through gradient descent on preserved-class outputs. To further enhance forgetting, we introduce a two-phase fine-tuning procedure: the first phase uses a large learning rate on both OES and retained data to overwrite class-specific representations, while the second phase uses a smaller learning rate and only retained data to refine decision boundaries and recover utility. 

Our main contributions are as follows:

\begin{itemize}
    \item We introduce GFOES, a practical framework for machine unlearning in the \textit{few-shot zero-glance} setting, enabling class-specific forgetting with no access to the forget set and minimal use of retained data.

    \item Our framework leverages synthetic OES and a two-phase fine-tuning procedure to balance effective forgetting and utility preservation, even under severe data constraints.

    \item Extensive experiments on CIFAR-10, CIFAR-100, and Fashion-MNIST demonstrate that GFOES consistently outperforms state-of-the-art methods in both forgetting effectiveness and model retention quality.
\end{itemize}

\section{Related Work}
\label{sec2}

\paragraph{Exact Unlearning} 
A key branch of machine unlearning research is exact unlearning, which focuses on replicating the outcome of retraining a model on a dataset with specific records removed, while reducing the computational cost. A foundational approach is the SISA framework~\cite{bourtoule2019machine}, which partitions training data into shards to enable selective retraining. This idea has inspired several extensions, including DaRE~\cite{brophy2021machine} for decision forests using cached statistics, GraphEraser~\cite{chen2022graph} for graph neural networks via graph partitioning, and asynchronous client-level unlearning in federated settings~\cite{su2023asynchronous}. Other strategies include memory-augmented retraining~\cite{yan2022arcane}, feature-score-based shortcuts~\cite{cao2015towards}, and Hessian-guided retraining~\cite{liu2022right}. Despite their theoretical appeal, exact unlearning methods remain impractical in real-world scenarios due to their computational and memory overhead.

\paragraph{Approximate Unlearning} 
To address the limitations of exact unlearning, approximate methods aim to remove the influence of target data without replicating full retraining. Gradient-based approaches include gradient ascent to erase backdoors~\cite{liu2022backdoor} and selective parameter tuning~\cite{fan2024salun}. Influence function-based methods, such as Certified Removal~\cite{GuoGHM20} and its scalable extensions~\cite{tanno2022repairing,suriyakumar2022algorithms}, estimate per-sample contributions. Hybrid strategies incorporate linear approximations~\cite{izzo2021approximate}, Fisher information masking~\cite{golatkar2021mixed}, and UNSIR~\cite{tarun2023fast}, which introduced the zero-glance setting. Despite their flexibility, these methods face two key challenges: (1) utility degradation due to catastrophic forgetting, and (2) reliance on auxiliary datasets to maintain accuracy~\cite{parisi2019continual,chundawat2023can}. While UNSIR operates without access to the forget set, our analysis indicates its limited effectiveness in removing representation-level knowledge under strict data constraints.

\paragraph{Few-shot Unlearning}
As a subset of approximate unlearning, few-shot unlearning removes the influence of target data using only a small portion of the original training set, reducing storage costs and reflecting realistic post-deletion scenarios~\cite{yoon2023fewshot}. Representative methods include model inversion for proxy data reconstruction~\cite{yoon2023fewshot} and manifold mixup for vertical federated learning~\cite{gu2024few}. A special case is zero-shot unlearning~\cite{fastchundawat2023zero,zhang2025toward}, which relies on knowledge distillation to unlearn without any original data. While zero-shot unlearning addresses a stricter challenging setting, it faces two key issues in practical few-shot scenarios: (1) lack of source data causes utility degradation even when some data remain; and (2) distillation often incurs high computational overhead, limiting scalability to large models and datasets.

% In practice, MLaaS providers often retain limited samples of the original dataset, which, if effectively leveraged, can substantially improve model-utility preservation during unlearning. These observations highlight the necessity for a framework that operates under the \textit{few-shot zero-glance} setting, achieving complete removal of class-specific knowledge while efficiently maintaining utility with minimal retained data.

In practice, MLaaS providers often retain a small portion of the original dataset, which, if effectively utilised, can significantly improve utility preservation during unlearning. These insights underscore the need for a framework that operates in the \textit{few-shot zero-glance} setting, achieving complete removal of class-specific knowledge while efficiently preserving utility with minimal retained data.

% To address the above constraints, we propose GFOES, a practical framework tailored for realistic unlearning. Operating under a strict \textit{few-shot zero-glance} setting, it requires neither the original dataset nor access to forgotten data. GFOES leverages synthetic supervision and a two-
% phase fine-tuning procedure to eliminate class-specific knowledge at both prediction and representation levels while preserving model utility. This makes it well-suited for privacy-sensitive applications under modern compliance standards.

\section{Methodology}
\subsection{Problem Formulation}
We formalise the task of machine unlearning in the \textit{few-shot zero-glance} setting as follows: Consider a $K$-class classification task defined on a labelled dataset $\mathcal{D} = \{(x_i, y_i)\}_{i=1}^{n}$, where $x_i \in \mathcal{X} \subseteq \mathbb{R}^d$ represents the $d$-dimensional input features, and $y_i \in \mathcal{Y} = \{1, 2, \dots, K\}$ denotes the class labels. Let $M_0 = f(\cdot; \theta_0)$ be a predictive model trained on $\mathcal{D}$ by minimising a standard classification loss function $\mathcal{L}$. The optimisation objective is defined as: $\theta_0 = \text{argmin}_{\theta} \frac{1}{n} \sum_{i=1}^{n} \mathcal{L}(y_i, f(x_i; \theta)).$
%% 25-07-23 节约空间 把这个公式改成in-line公式
% \begin{equation}
% \theta_0 = \arg\min_{\theta} \frac{1}{n} \sum_{i=1}^{n} \mathcal{L}(y_i, f(x_i; \theta)).
% \end{equation}

Given a subset of categories to be forgotten, denoted as \( \mathcal{Y}_f \subset \mathcal{Y} \), we partition the original dataset \( \mathcal{D} \) into two disjoint subsets. The target dataset is defined as \( \mathcal{D}_f = \{(x_i, y_i) \mid y_i \in \mathcal{Y}_f\} \), which contains all instances belonging to the categories selected for forgetting. The retained dataset is defined as \( \mathcal{D}_r = \mathcal{D} \setminus \mathcal{D}_f \), which includes all remaining instances associated with the categories to be preserved.

In the ideal case where full access to \( \mathcal{D}_f \) and \( \mathcal{D}_r \) is available, the unlearning task can be modelled as a multi-objective optimisation:
\begin{equation}
\theta^* = \arg\min_{\theta} \left[ -\lambda \cdot \mathcal{L}(\mathcal{D}_f; \theta) + (1 - \lambda) \cdot \mathcal{L}(\mathcal{D}_r; \theta) \right],
\label{eq:ideal_unlearning}
\end{equation}
where \( \lambda \in (0,1) \) controls the trade-off between forgetting the target data and preserving the retained knowledge.

However, in the \textit{few-shot zero-glance} setting, two critical constraints apply:
\begin{itemize}
    \item \textbf{Zero-glance constraint:} The dataset \( \mathcal{D}_f \) targeted for removal is completely inaccessible due to privacy or regulatory requirements. As a result, \( \mathcal{D}_f \) cannot be directly used in the unlearning process.
    
    \item \textbf{Few-shot constraint:} Only a small subset of the retained dataset, denoted as \( \mathcal{D}_{rs} \subset \mathcal{D}_r \), is available for model adjustment, where typically \( |\mathcal{D}_{rs}| \ll |\mathcal{D}_r| \).
\end{itemize}

Consequently, direct optimisation of Eq.~\ref{eq:ideal_unlearning} becomes infeasible under the few-shot zero-glance constraints. To address this, we substitute \( \mathcal{D}_f \) with a set of synthetic samples \( \tilde{\mathcal{D}} = \{(\tilde{x}_i, \tilde{y}_i)\} \), where each \( \tilde{y}_i \in \mathcal{Y}_f \), and reformulate the unlearning objective as:
\begin{equation}
\theta^* = \arg\min_{\theta} \left[ -\lambda \cdot \mathcal{L}(\mathcal{Y}_f; f(\tilde{x}; \theta)) + (1 - \lambda) \cdot \mathcal{L}(\mathcal{D}_{rs}; \theta) \right],\nonumber
\label{eq3}
\end{equation}
where \( \tilde{x} \) denotes synthetic inputs representing the categories in \( \mathcal{Y}_f \), and \( \mathcal{D}_{rs} \subset \mathcal{D}_r \) is the few-shot retained subset available for model preservation.

Formally, machine unlearning under the \textit{few-shot zero-glance} setting aims to satisfy the following two objectives:

\begin{itemize}
    \item \textbf{Effective Unlearning:} The model should exhibit significantly degraded predictive accuracy on the forgotten classes. This is expressed as:
    \begin{equation}
        \mathbb{E}_{(x,y) \sim \mathcal{D}_f} \left[ \mathcal{L}(y, f(x; \theta^*)) \right] \gg \mathbb{E}_{(x,y) \sim \mathcal{D}_f} \left[ \mathcal{L}(y, f(x; \theta_0)) \right],\nonumber
        \label{eq:forgetting}
    \end{equation}
    where \( \theta_0 \) and \( \theta^* \) denote the model parameters before and after unlearning, respectively.

    \item \textbf{Performance Preservation:} The model should maintain predictive performance on the retained dataset:
    \begin{equation}
        \mathbb{E}_{(x,y) \sim \mathcal{D}_r} \left[ \mathcal{L}(y, f(x; \theta^*)) \right] \approx \mathbb{E}_{(x,y) \sim \mathcal{D}_r} \left[ \mathcal{L}(y, f(x; \theta_0)) \right].\nonumber
        \label{eq:retention}
    \end{equation}
\end{itemize}

This formulation encapsulates the core challenge of \textit{few-shot zero-glance} setting: constructing synthetic data that effectively induces forgetting while preserving model utility, all under strict data access constraints

\begin{figure*}[t]
    \centering
    \includegraphics[width=0.99\textwidth]{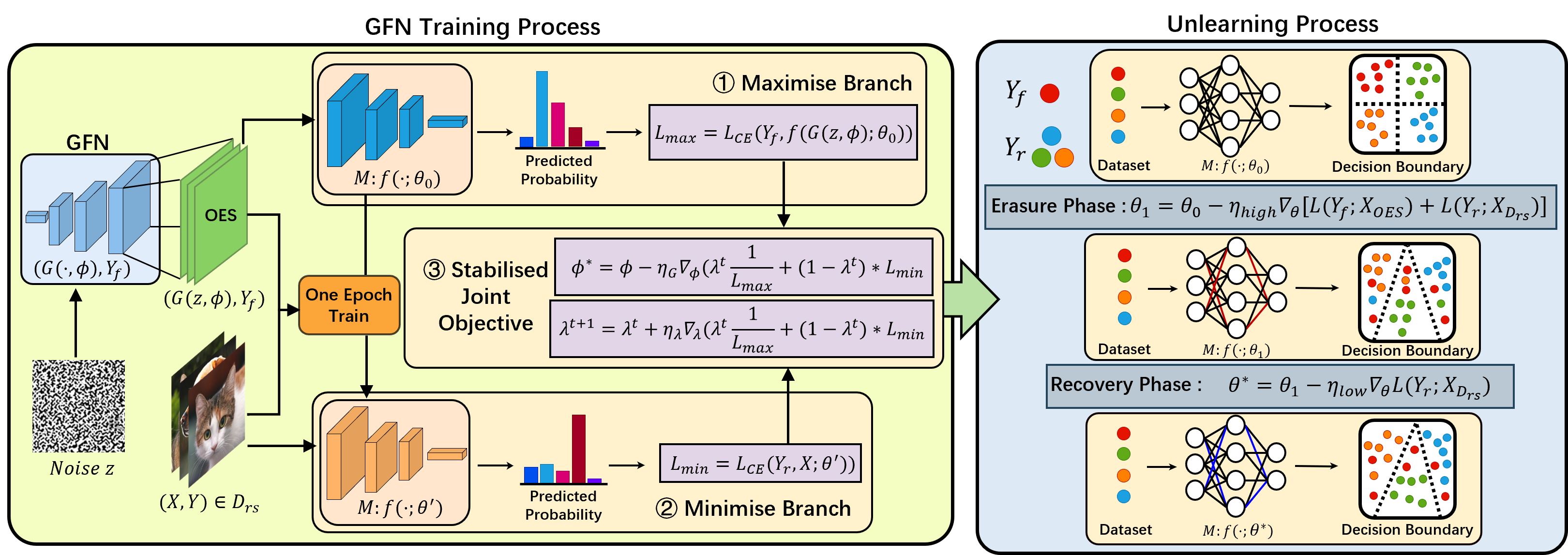}
    \caption{The training process of GFN consists of three components: the \textit{Maximise Branch}, the \textit{Minimise Branch}, and the \textit{Stabilised Joint Objective}, as illustrated on the left. The right side shows the machine unlearning procedure applied to the generated OES.}
    \label{Fig2}
\end{figure*}

\subsection{Optimal Erasure Samples}
\label{sec:OES}

Synthesising substitute samples \( \tilde{x} \) under the zero-glance constraint, where only the label set \( \mathcal{Y}_f \) of the forgotten classes is available, is inherently difficult because label information alone provides no access to the associated feature distributions. To address this, we adopt a reverse design strategy. Instead of approximating the true data distribution, we deliberately generate synthetic samples that deviate from it. These adversarial-like inputs, paired with the target labels, mislead the model into associating irrelevant features with the forgotten classes. As a result, the decision boundaries of the forgotten classes are disrupted with minimal interference to the retained ones.

We refer to these synthetic samples as {Optimal Erasure Samples (OES)}. Fine-tuning the model on OES weakens its ability to recognise the true semantics of classes in \( \mathcal{Y}_f \), thereby inducing effective class-level forgetting. Specifically, each sample from a class \( y \in \mathcal{Y}_f \) is replaced with an OES labelled as \( y \), while the original data for classes in \( \mathcal{Y}_r \) is preserved. This approach reduces the original multi-objective unlearning formulation (Eq.~\ref{eq:ideal_unlearning}) to a single fine-tuning objective as follows:
\begin{align}
&\theta^* = \arg\min_{\theta} \; \frac{1}{n} \sum_{i=1}^{n} \mathcal{L}\left(f(x_i'; \theta), y_i\right), \nonumber \\
& \quad \text{where} \quad
x_i' =
\begin{cases}
x^*, & \text{if } y_i \in \mathcal{Y}_f, \\
x_i, & \text{otherwise}.
\end{cases}
\label{eq:finetune_oes}
\end{align}

To construct effective OES under few-shot constraints, two key conditions must be met: (1) when labelled with \( \mathcal{Y}_f \), OES should yield high loss under the original model \( f(\cdot; \theta_0) \) and low loss under the unlearned model \( f(\cdot; \theta^*) \), disrupting the model's prior knowledge of the forgotten classes; (2) fine-tuning on OES should preserve accuracy on the retained set, keeping its loss low and comparable to pre-unlearning performance.

We now present a formal definition for OES that unifies these two constraints:

\begin{definition}[Optimal Erasure Samples (OES)]
Let \( \mathcal{D}^* = \{(x_i^*, y_i^*)\}_{i=1}^s \) be a synthetic dataset, where each \( y_i^* \in \mathcal{Y}_f \). This dataset qualifies as OES if it minimises the following composite objective:
\begin{align}
\mathcal{D}^* = & \arg\min_{\mathcal{D}'} \big[ 
     -\lambda \cdot \mathbb{E}_{(x, y) \in \mathcal{D}'} \mathcal{L}(y, f(x; \theta_0)) \nonumber \\
    & + (1 - \lambda) \cdot \mathbb{E}_{(x, y) \in \mathcal{D}_{rs}} \mathcal{L}(y, f(x; \theta^*)) \big],
\label{eq:oes_objective}
\end{align}
where \( \theta^* \) is a one-epoch updated model defined as:
\begin{equation}
\theta^* = \theta_0 - \eta \cdot \nabla_\theta \mathbb{E}_{(x, y) \in \mathcal{D}' \cup \mathcal{D}_{rs}} \mathcal{L}(y, f(x; \theta)) |_{\theta = \theta_0}.
\label{eq:theta_star}
\end{equation}
\end{definition}

Here, \( \lambda \in (0, 1) \) controls the trade-off between forgetting and retention of model utility. The first term encourages high loss on the forgotten classes to impair their influence, while the second term ensures that performance on the retained data is preserved following fine-tuning. The update step for \( \theta^* \) simulates the model's adaptation to the synthetic OES without requiring access to the actual data targeted for removal. This unified formulation captures both the destructive effect on the target classes and the stability requirement for the retained classes. 

\subsection{Generative Feedback Network}
\label{sec:gfn}

Constructing OES by directly solving the optimisation in Eq.~\eqref{eq:oes_objective} is computationally intractable. To overcome this challenge, we introduce a Generative Feedback Network (GFN), which transforms the OES search into a tractable, learnable optimisation process. As shown on the left side of Figure~\ref{Fig2}, the GFN decomposes the OES construction into two complementary sub-tasks: (1) maximising the classification loss on forgotten classes to induce forgetting, and (2) minimising the loss on retained classes to preserve utility. To achieve this, the architecture comprises three components: the \textit{Maximise} branch, the \textit{Minimise} branch, and a \textit{Stabilised Joint Objective} module that adaptively balances the two competing goals. These components jointly optimise a generator \( G(\cdot, \phi) \), initialised from random noise \( z \), to produce high-quality OES that meet both the forgetting and retention criteria in a stable and adaptive manner.

\subsubsection{Maximise Branch}
This component addresses constraint by encouraging the generator to produce synthetic samples that incur high classification loss under the original model \( f(\cdot; \theta_0) \). Formally, it optimises the following objective:
\begin{equation}
\mathcal{L}_{\max} = \mathcal{L}_{\mathrm{CE}}(\mathcal{Y}_f, f(G(z, \phi); \theta_0)),
\end{equation}
where \( G(z, \phi) \) denotes the synthetic sample generated from noise \( z \) via generator parameters \( \phi \), and \( \mathcal{L}_{\mathrm{CE}} \) is the cross-entropy loss computed with respect to the target labels \( \mathcal{Y}_f \).

\subsubsection{Minimise Branch} To preserve knowledge of the retained classes, we perform a one-epoch fine-tuning of the original model \( M_0 \) using both the generated samples and the few-shot retained dataset \( \mathcal{D}_{rs} \). The model parameters are updated as follows:
\begin{equation}
\theta' = \theta_0 - \eta_{\text{\tiny{GFN}}} \nabla_\theta \mathcal{L}(\mathcal{Y}_f \cup \mathcal{Y}_r, f(G(z, \phi) \cup \mathcal{D}_{rs}; \theta_0)),
\label{equal5}
\end{equation}
where \( \eta_{\text{\tiny{GFN}}} \) is the learning rate, and \( x_r \in \mathcal{D}_{rs} \). The updated model \( f(\cdot; \theta') \) is then evaluated on the retained classes to compute:
\begin{equation}
\mathcal{L}_{\min} = \mathcal{L}_{\mathrm{CE}}(\mathcal{Y}_r, f(x_r; \theta')).
\end{equation}

\subsubsection{Stabilised Joint Objective} The final training objective of GFN stabilises the conventional bi-objective loss by replacing \( \mathcal{L}_{\max} \) with its reciprocal:
\begin{equation}
\mathcal{L}_{\mathrm{GFN}} = \lambda \cdot \frac{1}{\mathcal{L}_{\max}} + (1 - \lambda) \cdot \mathcal{L}_{\min}, \quad \lambda \in (0,1).
\label{eq:gfn_final}
\end{equation}

This formulation mitigates the influence of large values in $\mathcal{L}_{\max}$ while retaining its gradient direction, as shown by its gradient: $\nabla_\phi \left( \frac{1}{\mathcal{L}_{\max}} \right) = -\frac{1}{\mathcal{L}_{\max}^2} \cdot \nabla_\phi \mathcal{L}_{\max}$.
% This formulation mitigates the influence of large values in \( \mathcal{L}_{\max} \), while retaining its gradient direction:
% \begin{equation}
% \nabla_\phi \left( \frac{1}{\mathcal{L}_{\max}} \right) = -\frac{1}{\mathcal{L}_{\max}^2} \cdot \nabla_\phi \mathcal{L}_{\max}.
% \end{equation}

To ensure both effective forgetting and utility preservation, we define a stabilised joint loss function for the generator:
\begin{equation}
\mathcal{L}_{\mathrm{GFN}}^{(t)} = \lambda_t \cdot \frac{1}{\mathcal{L}_{\max}} + (1 - \lambda_t) \cdot \mathcal{L}_{\min}, \quad \lambda_t \in (0,1),
\label{eq:gfn_adaptive}
\end{equation}
where the reciprocal form of \( \mathcal{L}_{\max} \) mitigates instability arising from unbounded gradients. The coefficient \( \lambda_t \) governs the trade-off between the forgetting and retention objectives at iteration \( t \), and is dynamically adjusted rather than manually selected.

To adaptively balance the competing objectives during training, \( \lambda_t \) is updated via gradient ascent on \( \mathcal{L}_{\mathrm{GFN}}^{(t)} \):
\begin{align}
&\lambda_{t+1} = \Pi_{(0,1)} ( \lambda_t + \eta \cdot \frac{\partial \mathcal{L}_{\mathrm{GFN}}^{(t)}}{\partial \lambda} ),\\
&\frac{\partial \mathcal{L}_{\mathrm{GFN}}^{(t)}}{\partial \lambda} = \frac{1}{\mathcal{L}_{\max}} - \mathcal{L}_{\min},
\end{align}where \( \eta > 0 \) denotes the learning rate, and \( \Pi_{(0,1)} \) is a projection operator ensuring that \( \lambda_t \) remains within the open interval \( (0,1) \). This mechanism obviates the need for manual hyperparameter tuning.

Under standard boundedness and differentiability assumptions, the update rule converges to a fixed point \( \lambda^* \) where \( \frac{1}{\mathcal{L}_{\max}(\lambda^*)} = \mathcal{L}_{\min}(\lambda^*) \), indicating a stable equilibrium between objectives. Formal analysis is provided in Appendix.

\subsection{Two-Stage Fine-Tuning for Unlearning}
\label{sec:two-stage-fine-tuning-unlearning}
Given the synthetic dataset $\mathcal{D}^*$ composed of OES, we adopt a two-phase fine-tuning procedure as shown on the right side of Figure~\ref{Fig2}, to balance effective forgetting and retention of useful knowledge. This strategy consists of two sequential phases: an \emph{Erasure Phase} that applies a large learning rate to jointly fine-tune the model on $\mathcal{D}^*$ and few-shot retained data $\mathcal{D}_{rs}$, followed by a \emph{Recovery Phase} that uses only $\mathcal{D}_{rs}$ with a smaller learning rate to restore the decision boundaries of preserved classes.

The strategy is motivated by the observation that aggressive updates are needed to disrupt representations of forgotten classes, yet they risk damaging the structure of retained ones. The second stage serves to repair this collateral damage without reintroducing the erased knowledge. While not grounded in formal theory, this design is empirically validated in Section~\ref{sec:ablation_study}, which shows its clear advantage over single-phase or uniformly trained alternatives.

\subsubsection{Erasure Phase}
This phase aggressively fine-tunes the model on both the synthetic OES targeting $\mathcal{Y}_f$ and the few-shot retained subset $\mathcal{D}_{rs}$, using a high learning rate $\eta_{\text{high}}$. The OES, generated adversarially, are designed to disrupt the model’s internal representation of the forgotten classes. Meanwhile, the inclusion of $\mathcal{D}_{rs}$ helps to stabilise optimisation and reduce collateral degradation.

The loss function is given by:
\begin{equation}
\mathcal{L}_{\text{erase}} = \mathcal{L}(\mathcal{Y}_f; f(\tilde{x}; \theta_0)) + \mathcal{L}(\mathcal{D}_{rs}; \theta_0),
\end{equation}
where $\tilde{x} = G(z, \phi^*)$ denotes synthetic inputs produced by the trained generator.

The model is then updated as:
\begin{equation}
\theta_1 = \theta_0 - \eta_{\text{high}} \cdot \nabla_{\theta} \mathcal{L}_{\text{erase}}.
\end{equation}

\subsubsection{Recovery Phase}
To mitigate any adverse impact on the retained classes introduced during the erasure phase, we perform a second fine-tuning stage using only the retained subset $\mathcal{D}_{rs}$ and a lower learning rate $\eta_{\text{low}}$. As the forgotten classes $\mathcal{Y}_f$ are excluded at this point, this stage serves to refine the model’s decision boundaries for $\mathcal{Y}_r$ without the risk of relearning discarded knowledge.

The model parameters are updated as:
\begin{equation}
\theta^* = \theta_1 - \eta_{\text{low}} \cdot \nabla_{\theta} \mathcal{L}(\mathcal{D}_{rs}; \theta_1).
\end{equation}

A detailed description of the procedure is presented in Appendix.

\section{Experiment}\label{sec4}
We conduct comprehensive experiments across multiple datasets, model architectures, and retained data ratios to evaluate the effectiveness of our method. Our validation assesses both the completeness of forgetting and the preservation of model utility using logit-based and representation-based metrics. We further perform ablation studies to examine the contribution of each key component. Due to space limitations, detailed configurations and experimental results are provided in Appendix. The code is available in the Supplementary Material.

\begin{table*}[t]
\centering
\footnotesize
\setlength{\tabcolsep}{4pt}
\scriptsize{
\begin{tabular}{l|l|ccc|ccc|ccc|ccc|ccc|ccc}
\toprule
& & \multicolumn{6}{c|}{Fashion-MNIST} & \multicolumn{6}{c|}{CIFAR-10} & \multicolumn{6}{c}{CIFAR-100} \\
& & \multicolumn{3}{c}{$\#y=1$} & \multicolumn{3}{c|}{$\#y=4$} & \multicolumn{3}{c}{$\#y=1$} & \multicolumn{3}{c|}{$\#y=4$} & \multicolumn{3}{c}{$\#y=1$} & \multicolumn{3}{c}{$\#y=10$} \\
& & 5\% & 10\% & 20\% & 5\% & 10\% & 20\% & 5\% & 10\% & 20\% & 5\% & 10\% & 20\% & 5\% & 10\% & 20\% & 5\% & 10\% & 20\% \\
\midrule
\multirow{11}{*}{\rotatebox{90}{$\mathcal{AD}_f$ (\%) $\downarrow$}}
& \textit{Original}  &86.80  &86.80  & 86.80 & 92.03 & 92.03 & 92.03 &85.82  &85.82  & 85.82  & 87.15 & 87.15  & 87.15 & 72.94 & 72.94 & 72.94 & 71.38 & 71.38 & 71.38 \\\cmidrule{2-20}
& Retrain & \textbf{0.00} & \textbf{0.00} & \textbf{0.00} & \textbf{0.00} & \textbf{0.00} & \textbf{0.00} & \textbf{0.00} & \textbf{0.00} & \textbf{0.00} & \textbf{0.00} & \textbf{0.00} & \textbf{0.00} & \textbf{0.00} & \textbf{0.00} & \textbf{0.00} & \textbf{0.00} & \textbf{0.00} & \textbf{0.00} \\
& NegGrad & 36.87 & 27.09 & 18.25 & 39.39 & 23.68 & 20.36  & 32.14 & 23.91 & 11.23 & 33.72 & 21.19 & 13.55 & 36.87 & 27.09 & 18.25 & 39.39 & 23.68 & 20.36 \\
& RL & 33.80 & 26.89 & 16.27 & 44.31 & 28.76 & 23.58 & 35.32 & 25.18 & 18.01 & 42.94 & 30.21 & 21.76  & 39.17 & 32.90 & 24.64 & 40.88 & 32.36 & 26.93 \\
& Fisher & 39.41 & 33.82 &27.65 & 41.29 & 35.37 & 30.08  & 40.73 & 32.15 & 26.09 & 42.87 & 34.01 & 28.46 & 32.41 & 23.37 & 17.02 & 33.46 & 26.78 & 19.12 \\
& UNSIR & \textbf{0.00} & \textbf{0.00} & \textbf{0.00} & \textbf{0.00} & \textbf{0.00} & \textbf{0.00} & \textbf{0.00} & \textbf{0.00} & \textbf{0.00} & \textbf{0.00} & \textbf{0.00} & \textbf{0.00} & \textbf{0.00} & \textbf{0.00} & \textbf{0.00} & \textbf{0.00} & \textbf{0.00} & \textbf{0.00} \\
& MI  & 25.47 & 13.85 & 7.12 & 27.33 & 13.92 & 6.41 & 27.51 & 12.09 & 6.02 & 27.82 & 13.45 & 9.55 & 29.81 & 12.89 & 8.92 & 26.12 & 14.85 & 10.05 \\
& GKT & \textbf{0.00} & \textbf{0.00} & \textbf{0.00} & \textbf{0.00} & \textbf{0.00} & \textbf{0.00} & \textbf{0.00} & \textbf{0.00} & \textbf{0.00} & \textbf{0.00} & \textbf{0.00} & \textbf{0.00} & \textbf{0.00} & \textbf{0.00} & \textbf{0.00} & \textbf{0.00} & \textbf{0.00} & \textbf{0.00} \\
& SalUn  & 25.81 & 13.66 & 6.92  & 23.44 & 16.39 & 4.27 & 27.92 & 11.41 & 8.36  & 21.13 & 17.82 & 2.94 & 29.31 & 13.28 & 9.67 & 22.76 & 15.54 & 5.11 \\
& GFOES & \textbf{0.00} & \textbf{0.00} & \textbf{0.00} & \textbf{0.00} & \textbf{0.00} & \textbf{0.00} & \textbf{0.00} & \textbf{0.00} & \textbf{0.00} & \textbf{0.00} & \textbf{0.00} & \textbf{0.00} & \textbf{0.00} & \textbf{0.00} & \textbf{0.00} & \textbf{0.00} & \textbf{0.00} & \textbf{0.00} \\
\midrule
\midrule
\multirow{11}{*}{\rotatebox{90}{$\mathcal{AD}_r$ (\%) $\uparrow$}}
& \textit{Original} &92.21  &92.21 & 92.21 & 91.42 & 91.42 & 91.42 &90.23   &90.23  & 90.23  & 91.56 & 91.56  & 91.56  & 71.47 & 71.47 & 71.47 & 71.49 & 71.49 & 71.49 \\\cmidrule{2-20}
& Retrain  &51.50  & 57.92 & 66.37 &50.49  & 58.20 & 64.03 &47.21  & 58.43 & 69.38 &52.92  & 59.22 & 68.06 & 46.44 & 49.07 & 53.50 & 47.23 & 50.95 & 54.24 \\
& NegGrad & 61.82 & 64.55 & 67.44 & 61.80 & 62.43 & 64.84 & 59.12 & 63.04 & 64.99 & 59.73 & 61.85 & 64.89 & 48.27 & 50.56 & 52.71 & 47.63 & 49.03 & 50.98 \\
& RL & 78.35 & 81.02 & 83.67 & 79.03 & 80.90 & 83.45 & 72.45 & 73.88 & 75.91 & 71.55 & 74.03 &77.04  & 58.01 & 60.15 & 62.76 & 57.41 & 59.00 & 61.94 \\
& Fisher & 61.45 & 67.89 & 73.21 & 60.56 & 65.78 & 71.12 & 62.34 & 63.45 & 69.56 & 61.23 & 66.78 & 71.23 & 47.89 & 54.56 & 59.34 & 49.01 & 56.78 & 59.43 \\
& UNSIR & 86.29 & 87.98 & 89.33 & 86.38 & 87.92 & 88.03 & 84.00 & 84.19 & 86.94 & 85.53 & 86.21 & 87.67  & 64.21 & 65.84 & 67.22 & 63.53 & 65.31 & 67.78 \\
& MI & 78.09 & 81.12 & 82.33 & 79.72 & 83.92 & 84.03 & 77.23 & 80.15 & 81.47 & 78.79 & 82.84 & 83.11  & 62.35 & 65.21 & 67.49 & 63.84 & 68.12 & 69.23 \\
& GKT & 87.41 & 87.41 & 87.41 & 86.91 & 86.91 & 86.91  & 55.23 & 55.23 & 55.23 & 48.76 & 48.76 & 48.76  & 46.17 & 46.17 & 46.17 & 30.29 & 30.29 & 30.29 \\
& SalUn  & 71.44 & 72.18 & 76.21 & 70.19 & 74.66 & 80.17  & 70.12 & 73.65 & 77.43 & 68.41 & 73.92 & 79.18 & 54.73 & 58.16 & 62.29 & 52.84 & 57.65 & 63.49 \\
& GFOES & \textbf{88.03} & \textbf{89.12} & \textbf{90.98} & \textbf{89.51} & \textbf{89.85} & \textbf{90.75} & \textbf{86.35} & \textbf{87.93} & \textbf{88.97} & \textbf{86.52} & \textbf{88.66} & \textbf{89.23} & \textbf{66.34} & \textbf{67.91} & \textbf{69.95} & \textbf{67.00} & \textbf{68.06} & \textbf{69.47} \\
\bottomrule
\end{tabular}
}
\caption{
Comprehensive results on three datasets across nine comparison methods and the proposed GFOES. $\uparrow$ indicates that higher values are better, while $\downarrow$ indicates that lower values are preferable. The \textit{Original} presents model performance prior to unlearning. $\#y$ denotes the number of classes to be unlearned, and $e$ represents the percentage of accessible training data. The best results are shown in \textbf{bold}.
}
\label{acc_all}
\end{table*}

\subsection{Experimental Setup}
\label{subsec_experiment_setup}
\subsubsection{Datasets and Model Architectures}  
We evaluate our framework on three image classification datasets: Fashion-MNIST~\cite{xiao2017fashion}, CIFAR-10, and CIFAR-100~\cite{krizhevsky2009learning}. In each setting, a subset of classes is randomly selected as the target for forgetting, and only 5\%, 10\%, or 20\% of samples per class are retained from the original training set. 

We adopt three representative models: 
(1) AllCNN~\cite{springenberg2014striving}, a lightweight convolutional network used for Fashion-MNIST; 
(2) ResNet-18~\cite{he2016deep}, a residual network with skip connections, applied to CIFAR-10; and 
(3) ResNet-50, a deeper residual architecture used for CIFAR-100.

\subsubsection{Comparison Methods}

As no existing method is explicitly designed for the \textit{few-shot zero-glance} setting, we evaluate a range of representative machine unlearning methods under this constraint. We include the following methods:  
(1) \textbf{Retrain}, a gold-standard baseline that trains a new model from scratch on the retained data only;  
(2) \textbf{NegGrad}~\cite{thudi2022unrolling}, which performs gradient ascent to increase the loss on forgotten data;  
(3) \textbf{RL}~\cite{golatkar2020eternal}, which assigns random labels to target samples for destructive fine-tuning;  
(4) \textbf{Fisher}~\cite{golatkar2020eternal}, which perturbs parameters based on their Fisher importance;  
(5) \textbf{UNSIR}~\cite{tarun2023fast}, a zero-glance method that injects adversarial noise to erase target representations;  
(6) \textbf{MI}~\cite{yoon2023fewshot}, which reconstructs proxy data via model inversion for fine-tuning;  
(7) \textbf{GKT}~\cite{fastchundawat2023zero}, a knowledge distillation-based method with thresholding to block target class transfer;  
(8) \textbf{SalUn}~\cite{fan2024salun}, which updates only saliency-sensitive parameters to forget specified data.

Specifically, Retrain, UNSIR, and GKT are naturally compatible with \textit{few-shot zero-glance} setting as they do not rely on access to the forgotten data. For the remaining methods that typically require access to the target data, we substitute the missing data with random noise inputs.

\subsection{Logit-based Evaluation}
\label{log-based}
Logit-based evaluation is a widely adopted protocol in machine unlearning~\cite{kim2025we}, facilitating an intuitive assessment of unlearning effectiveness. We employ two complementary metrics. The first is the \textit{accuracy on the forgotten dataset} ($A\mathcal{D}_f$), which quantifies the model's accuracy on data that ought to be forgotten; ideally, this value should approach zero, indicating successful erasure of target class knowledge. The second is the \textit{accuracy on the retained dataset} ($A\mathcal{D}_r$), which measures the model's accuracy on data that should be preserved. A higher value reflects better retention of the model’s original utility.

As shown in Table~\ref{acc_all}, GFOES consistently achieves $\mathcal{AD}_f = 0$ across all datasets and settings, indicating complete forgetting of the target classes. Comparable results are observed with Retrain, GKT, and UNSIR, which also attain zero forgetting error. However, only GFOES and UNSIR accomplish this without directly accessing the forget set, thereby satisfying the \textit{zero-glance} constraint. In contrast, methods such as NegGrad, RL, Fisher, MI, and SalUn rely on real data to compute gradients or saliency maps, and consequently struggle under the few-shot zero-glance setting. These methods exhibit substantial residual influence; for example, Fisher reaches $39.41\%$ on Fashion-MNIST with $\#y=1$ and $e=5\%$. Notably, SalUn and MI further degrade when $e$ is small, highlighting their sensitivity to noisy input and dependence on sufficient retained data.

In terms of retained utility, GFOES consistently achieves the highest $\mathcal{AD}_r$ across all datasets and settings. For example, on CIFAR-10 with $\#y=4$ and $e=20\%$, GFOES retains $89.23\%$ accuracy, surpassing RL ($77.04\%$), MI ($83.11\%$), and UNSIR ($87.67\%$). This strong performance is driven by two key components: an OES generator aligned with the retained class distribution and a two-phase fine-tuning strategy that separates forgetting from recovery. In contrast, Retrain underfits when data access is limited (e.g., $50.49\%$ on CIFAR-10, $\#y=4$, $e=5\%$), while GKT fails to generalise on complex datasets (e.g., $30.29\%$ on CIFAR-100, $\#y=10$, $e=20\%$). Other baselines show further drops due to their inability to cope with noisy or insufficient forget samples.

\subsection{Representation-based Evaluation}
\label{rep-based}
While logit-based evaluation is widely used to assess unlearning, it primarily reflects the model's reduced ability to classify forgotten classes and may not indicate whether internal representations related to those classes have been removed. Prior work~\cite{kim2025we} shows that when the feature extractor retains relevant knowledge, retraining the final layer can restore classification performance. To address this limitation, we include a representation-based evaluation that examines changes in the internal feature space of the model.

% \subsubsection{Weight Distance}  We conduct this evaluation under $e = 10\%$ for one-class unlearning on three datasets. GFOES is compared with GKT and UNSIR, all achieving $A\mathcal{D}_f = 0$. To assess internal changes, we compute the L2 distance between the weights of the unlearned and original models, separating each into the feature extractor and final layer. A larger distance indicates greater deviation and stronger forgetting. As shown in Figure~\ref{weight1}, all methods significantly alter the final layer, disrupting classification. However, only GFOES and GKT also modify the feature extractor, indicating deeper removal of class-related knowledge. In contrast, UNSIR preserves this component, maintaining utility but failing to erase latent representations. These results suggest more comprehensive forgetting across both representation and prediction layers.

% \begin{figure}[t]
% \centering
% \includegraphics[width=0.45\textwidth]{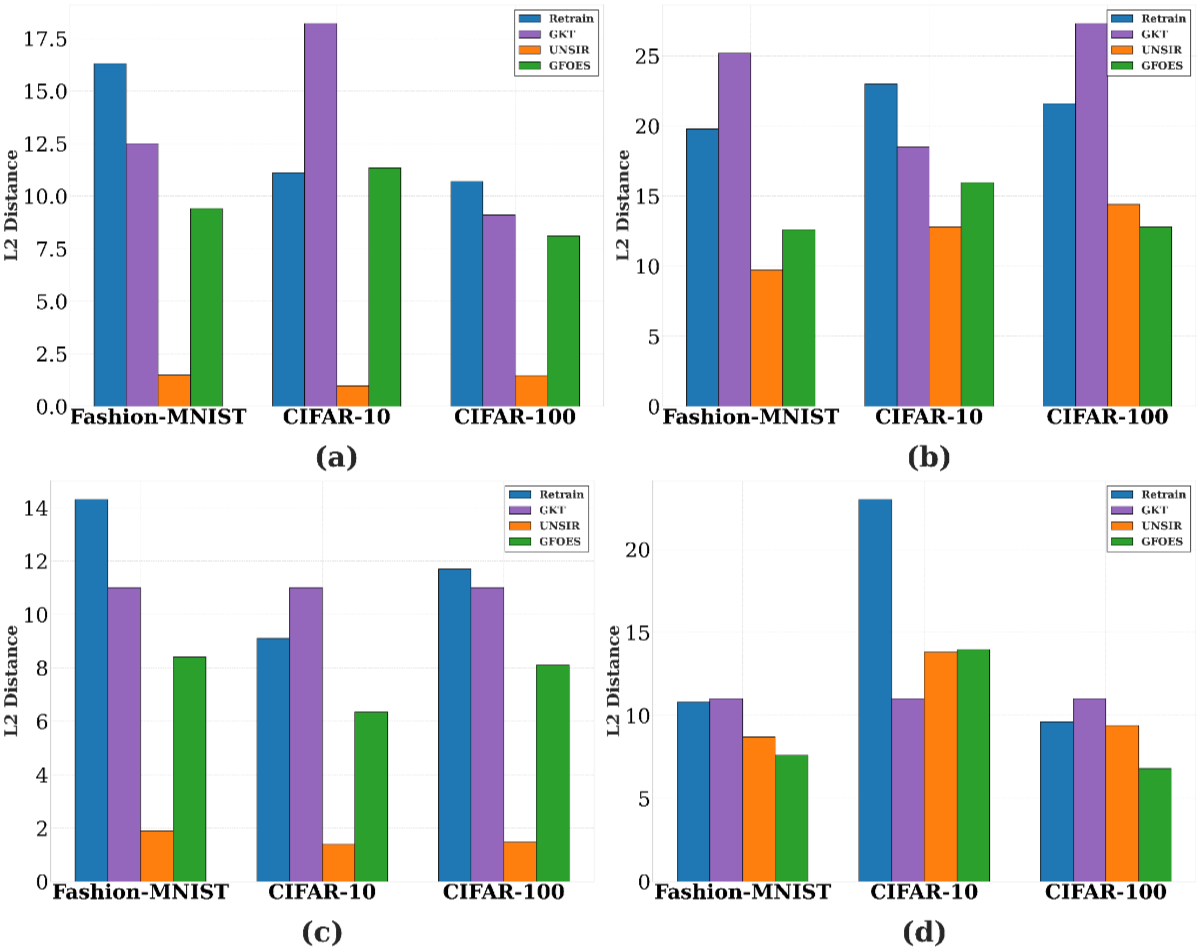}
% \caption{L2 distances between the weights of the unlearned and original models. (a) shows the distances in the {previous N-1 layers}; (b) shows the distances in the {last layer}.}
% \label{weight1}
% \end{figure}

\begin{figure}[t]
\centering
 \includegraphics[width=0.48\textwidth]{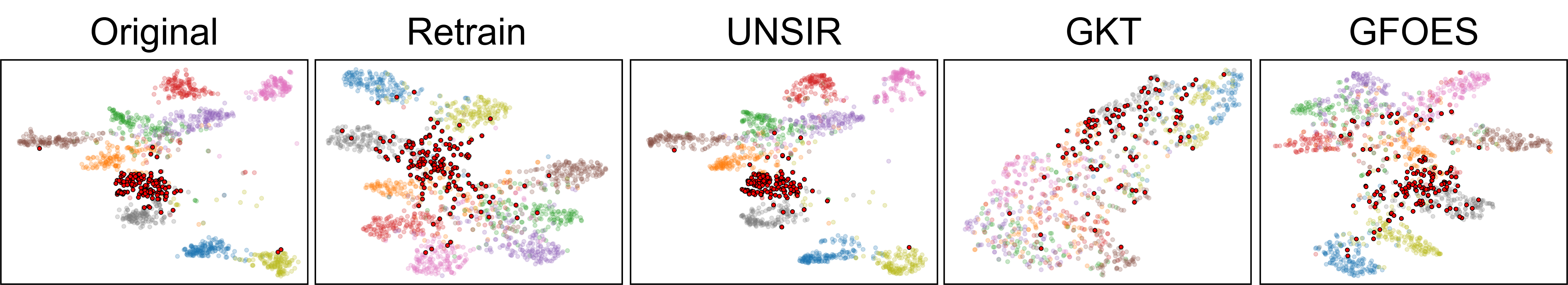}
    \caption{t-SNE visualisation of feature representations for single-class unlearning on CIFAR-10. Red points indicate samples from the forgotten class, while other colours represent the retained classes.}
\label{t-sine}
\end{figure}

\begin{figure}[t]
\centering
 \includegraphics[width=0.49\textwidth]{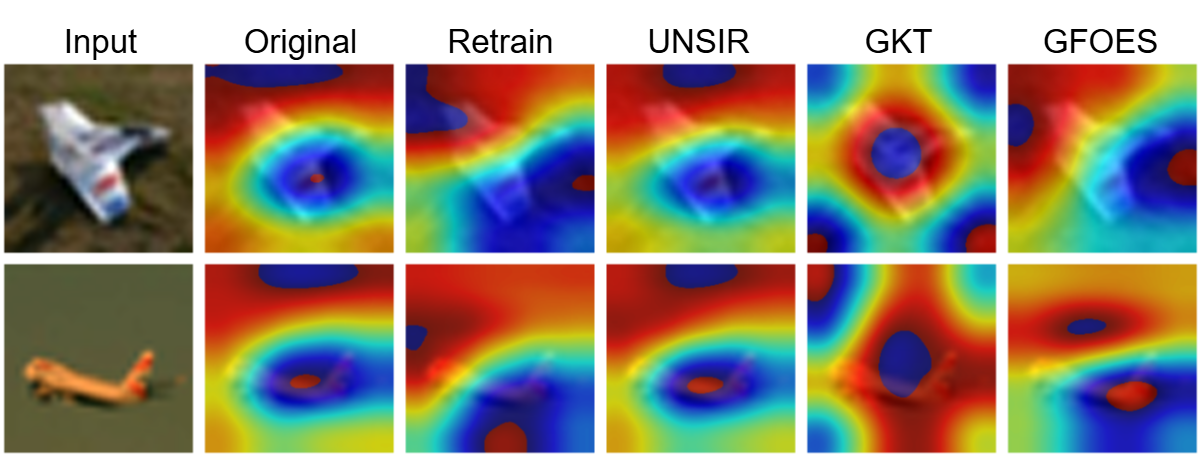}
\caption{GradCAM visualisation for single-class unlearning on CIFAR-10. Warmer colours highlight regions the model attends to when predicting the forgotten class, while cooler colours indicate lower attention.}
\label{gradcam}
\end{figure}

% \subsubsection{Feature Visualisation}
To evaluate whether unlearning disrupts class-specific representations, we apply t-SNE~\cite{van2008visualizing} to visualise feature extractor outputs on CIFAR-10 under the single-class setting. As shown in Figure~\ref{t-sine}, the original model forms well-separated clusters, while UNSIR preserves the forgotten class cluster, indicating incomplete forgetting. GKT scatters the forgotten class but also distorts retained ones, potentially harming utility. In contrast, our method disperses the forgotten class while maintaining the structure of retained classes, achieving targeted forgetting without compromising performance. As shown in Figure~\ref{gradcam}, GradCAM~\cite{selvaraju2017grad} further reveals that our framework, like Retrain, shifts attention away from semantically meaningful regions, whereas UNSIR retains focus on them. This confirms that GFOES erases both prediction and representation-level dependencies of the forgotten class.

\subsection{Time Efficiency Analysis}
Efficiency is essential for practical unlearning. To assess computational overhead, we report the wall-clock time of the unlearning process for {Retrain}, {GKT}, {UNSIR}, and {GFOES} across datasets in Table~\ref{time_cost_analysis_results}, as only these methods perform well in the \textit{few-shot zero-glance} setting. For {GFOES}, we report the time separately for GFN training, two-phase fine-tuning, and the overall total. As shown in Table~\ref{time_cost_analysis_results}, {GFOES} consistently incurs substantially lower time costs across all datasets and scenarios compared to {Retrain} and {GKT}, while maintaining high-quality unlearning. Although {GKT} supports zero-shot unlearning, its reliance on knowledge distillation leads to the highest overhead. {UNSIR} is the fastest, but it fails to remove feature representations (see Sec.~\ref{rep-based}), limiting its effectiveness. Overall, {GFOES} achieves an optimal balance between efficiency and unlearning quality, making it suitable for real-world deployment. Additional time efficiency analyses are provided in Appendix.

\begin{table}[t]
\centering
\setlength{\tabcolsep}{3pt} % 稍微增加列间距，避免拥挤，可根据实际效果调整
\scriptsize{
\begin{tabular}{l|l|cc|cc|cc}
\toprule
&\multirow{2}{*}{Method} & \multicolumn{2}{c|}{Fashion-MNIST} & \multicolumn{2}{c|}{CIFAR-10} & \multicolumn{2}{c}{CIFAR-100}  \\
\cmidrule(lr){3-4} \cmidrule(lr){5-6} \cmidrule(lr){7-8}
& & {$\#y=1$} & {$\#y=4$} 
& {$\#y=1$}  & {$\#y=4$}  
& {$\#y=1$}  & {$\#y=10$}  \\
\midrule
\multirow{6}{*}{\rotatebox{90}{Time (s) $\downarrow$}} 
& Retrain & 1457.63  & 1348.66 & 10816.30  & 8798.16 & 16198.26 & 14974.74 \\
 % 调整cmidrule覆盖的列数
& GKT  & 5420.43 & 4801.20 & 14760.69 & 12932.40 & 23623.65 & 21294.82 \\ 
& UNSIR & 15.52 & 13.46 & 26.68  & 23.69 & 42.82 & 38.42 \\  \cmidrule{2-8} 
& GFOES & 277.53 & 231.96 & 619.73 & 517.18 & 1165.78 & 1007.50 \\
& ~GFN & 263.42 & 218.05 & 597.56 & 498.80 & 1125.68 & 971.08 \\
& ~Two-Stage & 14.11 & 13.91 & 22.17 & 18.38 &  40.10 & 36.42 \\ 

\bottomrule
\end{tabular}
}
\caption{Wall-clock time (seconds) for unlearning methods. {GFN} denotes the time for generating Optimal Erasure Samples, {Two-Stage} for the fine-tuning phase, and {GFOES} for the total (GFN + Two-Stage).}
\label{time_cost_analysis_results}
\end{table}

\subsection{Ablation Study}
\label{sec:ablation_study}
To evaluate the contributions of core components in the proposed GFOES, we conduct an ablation study focusing on (i) the effectiveness of OES generation and (ii) the role of the two-phase fine-tuning procedure. For data composition, we compare {OES}, which uses only OES-generated samples during erasure and 10\% retained data in recovery, with {$\text{D}_{\text{r}}$}, which uses 10\% retained data in both phases. For two-stage fine-tuning procedure, we consider three strategies: {$\text{R}_{\text{ls}}$} (0.004 in erasure, 0.004 in recovery), {$\text{R}_{\text{l}}$} (0.004 in both), and {$\text{R}_{\text{s}}$} (0.0004 in both). Combining these options yields six configurations, which we evaluate on Fashion-MNIST, CIFAR-10, and CIFAR-100 under single-class unlearning setting to assess the impact of each component.

\begin{table}[t]
\centering
\setlength{\tabcolsep}{3pt}
\scriptsize{\begin{tabular}{l|cc|cc|cc}
\toprule
 \multirow{2}{*}{Setting} 
& \multicolumn{2}{c}{Fashion-MNIST} 
& \multicolumn{2}{c}{CIFAR-10} 
& \multicolumn{2}{c}{CIFAR-100} \\
\cmidrule(lr){2-3} \cmidrule(lr){4-5} \cmidrule(lr){6-7}

& $\mathcal{AD}_f$ $\downarrow$ & $\mathcal{AD}_r$ $\uparrow$ 
& $\mathcal{AD}_f$ $\downarrow$ & $\mathcal{AD}_r$ $\uparrow$ 
& $\mathcal{AD}_f$ $\downarrow$ & $\mathcal{AD}_r$ $\uparrow$ \\
\midrule
GFOES & \textbf{0.00} &89.45 & \textbf{0.00} & 87.54& \textbf{0.00} &68.72\\
~OES+$\text{D}_{\text{r}}$+$\text{R}_{\text{l}}$  & \textbf{0.00} & 62.32 & \textbf{0.00} & 58.66 & \textbf{0.00} & 39.12 \\
~OES+$\text{D}_{\text{r}}$+$\text{R}_{\text{s}}$  & 37.29 & 89.85 & 33.15 & 88.19 & 30.24 &  \textbf{70.54} \\
~OES+$\text{R}_{\text{ls}}$                 & \textbf{0.00} & 62.16 & \textbf{0.00} & 60.81 & \textbf{0.00} & 40.94 \\
~OES+$\text{R}_{\text{l}}$                  & \textbf{0.00} & 48.36 & \textbf{0.00} & 45.20 & \textbf{0.00} & 27.26 \\
~OES+$\text{R}_{\text{s}}$                  & 34.29 & 84.94 & 30.98 & 80.53 & 26.23 & 62.06 \\
~$\text{D}_{\text{r}}$+$\text{R}_{\text{ls}}$     & 15.19 & 79.88 & 11.30 & 77.16 & 12.67 & 58.93 \\
~$\text{D}_{\text{r}}$+$\text{R}_{\text{l}}$      & 4.30 & 68.36 & 5.68 & 65.01 & 6.37 & 46.03 \\
~$\text{D}_{\text{r}}$+$\text{R}_{\text{s}}$      & 42.74 &  \textbf{92.97} & 38.70 &  \textbf{88.82} & 34.99 & 70.05 \\
\bottomrule
\end{tabular}}
\caption{ Ablation results for single-class unlearning across three datasets. Each setting combines data composition ({OES} or $\text{D}_{\text{r}}$) and learning rate strategies ($\text{R}_{\text{ls}}$, $\text{R}_{\text{l}}$, or $\text{R}_{\text{s}}$). The best results are shown in \textbf{bold}.
}
\label{acc3}
\end{table}

The ablation results in Table~\ref{acc3} confirm the complementary roles of OES and the two-phase fine-tuning procedure. Configurations lacking OES (e.g., $\text{D}{\text{r}}$+$\text{R}{\text{l}}$, $\text{D}{\text{r}}$+$\text{R}{\text{s}}$) consistently fail to achieve full forgetting, indicating that retained data alone is insufficient to erase target class knowledge. Conversely, using OES without proper learning rate control (e.g., OES+$\text{R}{\text{l}}$ or OES+$\text{R}{\text{s}}$) leads to utility degradation or incomplete forgetting. Only the full GFOES configuration (OES+$\text{D}{\text{r}}$+$\text{R}{\text{ls}}$) consistently achieves optimal performance, balancing complete forgetting ($\mathcal{AD}_f = 0$) with minimal utility loss across all datasets. This unequivocally demonstrates that both synthetic erasure signals and staged fine-tuning are essential for effective and stable unlearning.

% The ablation results in Table~\ref{acc3} highlight the complementary roles of OES and the two-phase fine-tuning procedure. Configurations without OES (e.g., $\text{D}_{\text{r}}$+$\text{R}_{\text{l}}$, $\text{D}_{\text{r}}$+$\text{R}_{\text{s}}$) generally fail to achieve full forgetting, even under aggressive or conservative learning rates, suggesting that retained data alone is insufficient to erase target class knowledge. Similarly, using OES without proper learning rate control (e.g., OES+$\text{R}_{\text{l}}$ or OES+$\text{R}_{\text{s}}$) leads to either utility degradation or incomplete forgetting. Only the full GFOES configuration (OES+$\text{D}_{\text{r}}$+$\text{R}_{\text{ls}}$) achieves consistently optimal performance, combining complete forgetting ($\mathcal{AD}_f = 0$) with minimal utility loss across all datasets. This confirms that both synthetic erasure signals and staged fine-tuning are essential for effective and stable unlearning.

\section{Conclusion}
\label{sec5}
In this work, we propose GFOES, a novel machine unlearning framework tailored for the \textit{few-shot zero-glance} setting. GFOES introduces Optimal Erasure Samples as targeted adversarial signals to guide forgetting, combined with a two-phase fine-tuning procedure to remove class-specific knowledge while preserving performance on retained classes. Extensive experiments across diverse datasets and architectures demonstrate that GFOES achieves complete forgetting at both the logit and representation levels, while consistently outperforming existing baselines in utility retention under strict constraints. Ablation studies further confirm the importance of both the OES component and the two-phase fine-tuning procedure. Overall, GFOES provides a practical and scalable solution for data deletion compliance in real-world scenarios where direct access to the forget set is unavailable.

\bibliography{aaai2026}

\appendix
\setcounter{secnumdepth}{2}

\renewcommand{\thesection}{\Alph{section}} 
\renewcommand{\thesubsection}{\thesection.\arabic{subsection}}

\twocolumn[
\begin{center}
    \LARGE \textbf{Appendix for ``Synthetic Forgetting without Access: A Few-shot Zero-glance Framework for Machine Unlearning"}
\end{center}
\vspace{1em}
]

\section{Pseudo-code for the GFOES Framework}

This section presents pseudo-code for the two main components within the GFOES framework. Algorithm~\ref{alg:gfn} provides a description of the Generative Feedback Network (GFN), which synthesises Optimal Erasure Samples (OES) to support targeted forgetting under zero-glance constraints. Algorithm~\ref{alg:finetune} outlines the two-stage fine-tuning procedure, comprising an erasure phase that removes knowledge related to forgotten classes, followed by a recovery phase that restores model utility on the remaining classes.

\begin{algorithm}[h]
\caption{Generative Feedback Network (GFN)}
\label{alg:gfn}
\begin{algorithmic}[1]
\REQUIRE Initial model $f(x; \theta_0)$, retained data $\mathcal{D}_{rs}$, forgotten class labels $Y_f$, learning rate $\eta$, initial trade-off coefficient $\lambda_0$
\ENSURE Trained generator $G(z; \phi^*)$ 

\STATE Initialise generator parameters $\phi$ and set $\lambda \gets \lambda_0$
\FOR{$t = 1$ to $T$}
    \STATE Sample noise $z \sim \mathcal{N}(0, 1)$
    \STATE Generate synthetic samples  $G(; \phi)$
    
    \STATE \textbf{Maximise branch}:
    \STATE \quad Compute classification loss on forgotten classes:
    \begin{equation*}
    \mathcal{L}_{\max} = \mathcal{L}_{\mathrm{CE}}(\mathcal{Y}_f, f(G(z, \phi); \theta_0))
    \end{equation*}
    \STATE \textbf{One-epoch train}:
    \begin{equation*}
       \theta' \gets \theta_0 - \eta_{\text{\tiny{GFN}}} \nabla_\theta \mathcal{L}(\mathcal{Y}_f \cup \mathcal{Y}_r, f(G(z, \phi) \cup \mathcal{D}_{rs}; \theta_0))
    \end{equation*}
    \STATE \textbf{Maximise branch}:
    \STATE \quad Compute retained-class loss after one-epoch update: 
    \begin{equation*}
        L_{\text{min}} = \mathcal{L}_{CE}(Y_r, f(\mathcal{D}_{rs}; \theta'))
    \end{equation*}
    \STATE \textbf{Stabilised Joint Objective}:
    \STATE \quad Compute stabilised joint loss:
   \begin{equation*}
 \mathcal{L}_{\mathrm{GFN}}^{(t)} = \lambda_t \cdot \frac{1}{\mathcal{L}_{\max}} + (1 - \lambda_t) \cdot \mathcal{L}_{\min}
    \end{equation*}

    \STATE  \quad Update generator parameters: \\
    \begin{equation*}
    \phi^* \gets \phi - \eta \cdot \nabla_\phi \mathcal{L}_{\text{GFN}}
    \end{equation*}
    \STATE \quad Adapt trade-off coefficient:
    \begin{equation*}
        \lambda_{t+1} = \Pi_{(0,1)} ( \lambda_t + \eta \cdot \frac{\partial \mathcal{L}_{\mathrm{GFN}}^{(t)}}{\partial \lambda} )
    \end{equation*}
    \ENDFOR\RETURN Trained generator $G(; \phi^*)$
\label{alg_gafn}
\end{algorithmic}
\end{algorithm}

\begin{algorithm}[h]
\caption{Two-Stage Fine-Tuning}
\label{alg:finetune}
\begin{algorithmic}[1]
\REQUIRE Original model $f(x; \theta_0)$, trained generator $G(; \phi)$, retained data $\mathcal{D}_{rs}$, learning rates $\eta_{\text{high}}, \eta_{\text{low}}$
\ENSURE Unlearned model parameters $\theta^*$

\STATE \textbf{Erasure Phase}
\STATE \quad Sample latent code $z \sim \mathcal{N}(0, 1)$
\STATE  \quad Generate (OES): $\tilde{x}\gets G(z; \phi^*)$
\STATE  \quad Compute erasure loss: 
\begin{equation*}
\mathcal{L}_{\text{erase}} = \mathcal{L}(\mathcal{Y}_f; f(\tilde{x}; \theta_0)) + \mathcal{L}(\mathcal{D}_{rs}; \theta_0)
\end{equation*}
\STATE  \quad Update model: 
\begin{equation*}
\theta_1 \gets \theta_0 - \eta_{\text{high}} \cdot \nabla_{\theta} \mathcal{L}_{\text{erase}}
\end{equation*}
\STATE \textbf{Recovery Phase}
\STATE  \quad Compute recovery loss: 
\begin{equation*} 
\mathcal{L}_{\text{rec}} = \mathcal{L}_{CE}(Y_r, f(\mathcal{D}_{rs}; \theta_1))
\end{equation*}
\STATE  \quad Update model: 
\begin{equation*} 
\theta^* \gets \theta_1 - \eta_{\text{low}} \cdot \nabla_\theta \mathcal{L}_{\text{rec}}
\end{equation*}
\RETURN Unlearned model parameters $\theta^*$
\end{algorithmic}
\end{algorithm}

\section{Convergence Analysis of the GFN}

\subsection{Instability of the Subtractive Objective}

Let $F(\phi)$ and $G(\phi)$ denote the retention and forgetting losses, respectively. The conventional subtractive objective is defined as:
\begin{equation}
  \widetilde{\mathcal{L}}(\phi) = F(\phi) - G(\phi) \nonumber
\end{equation}
Although this formulation appears intuitive, it presents two main issues:
\begin{itemize}
    \item \textbf{Unboundedness}: As $G(\phi)$ can increase without limit while $F(\phi) \ge 0$, the objective becomes unbounded below, which may result in divergent optimisation paths.
    \item \textbf{Gradient instability}: The competing gradients $\nabla_\phi F$ and $\nabla_\phi G$ may differ in scale. When $G(\phi)$ dominates, the updates become excessively large, which causes oscillatory behaviour.
\end{itemize}

\subsection{Stabilised Joint Objective}

To address these issues, we introduce a stabilised joint objective that incorporates an adaptive balance coefficient \(\lambda_t \in (0, 1)\):
\begin{equation}
    \label{sec:appendix_stable_obj}
    \mathcal{J}_t(\phi_t, \lambda_t) = \lambda_t \frac{1}{G(\phi_t)} + (1-\lambda_t) F(\phi_t),~ \lambda_t \in (0,1).
\end{equation}

The parameters are updated through gradient descent with respect to \(\phi\) and projected gradient ascent with respect to \(\lambda\):
\begin{align}
    &\phi_{t+1} = \phi_t - \eta_\phi \nabla_\phi \mathcal{J}_t(\phi_t, \lambda_t), \\
&\lambda_{t+1} = \Pi_{[0,1]} \left( \lambda_t + \eta_\lambda \nabla_\lambda \mathcal{J}_t(\phi_t, \lambda_t) \right).
\end{align}

\subsection{Theoretical Assumptions}

We state the following mild assumptions:
\begin{itemize}
    \item \textbf{A1 (Smoothness)}: Both $F(\phi)$ and $G(\phi)$ are differentiable and $L$-smooth. That is,
    \[
    \|\nabla h(\phi) - \nabla h(\phi')\| \leq L \|\phi - \phi'\|, \quad \text{for all } h \in \{F, G\}.
    \]
    This condition holds for cross-entropy losses defined over bounded parameter domains.
    
    \item \textbf{A2 (Bounded domain)}: The feasible set $\Phi$ for $\phi$ is compact, which ensures that \(\sup_{\phi \in \Phi} G(\phi) < \infty\).
    
    \item \textbf{A3 (Positivity)}: For all \(\phi \in \Phi\), the forgetting loss satisfies \(G(\phi) \geq \epsilon > 0\). This property is natural in cross-entropy-based objectives.
\end{itemize}

\begin{table*}[t]
\centering
\begin{tabular}{l|ccccc}
\toprule
 & \textbf{Learning Rate} & \textbf{Epochs} & \textbf{Batch Size} & \textbf{Weight Decay} & \textbf{Gradient Clipping} \\
\midrule
AllCNN      & $4\mathrm{e}{-4}$ & 20  & 32 & $1\mathrm{e}{-4}$ & 0.1\\
ResNet-18   & $4\mathrm{e}{-4}$ & 100 & 32 & $1\mathrm{e}{-4}$ & 0.1\\
ResNet-50   & $4\mathrm{e}{-4}$ & 100 & 32 & $1\mathrm{e}{-4}$ & 0.1\\
\bottomrule
\end{tabular}
\caption{Training protocol for the original models.}
\label{tab:org}
\end{table*}

\begin{table*}[t]
\centering
\begin{tabular}{l|ccccc}
\toprule
 & \textbf{Learning Rate} & \textbf{Epochs} & \textbf{Batch Size} & \textbf{Weight Decay} & \textbf{Gradient Clipping} \\
\midrule
GFN             & $4\mathrm{e}{-3}$ & 20 & 32 & $1\mathrm{e}{-4}$ & 0.1\\
Erasure Phase   & $4\mathrm{e}{-3}$ & 1  & 32 & $1\mathrm{e}{-4}$ & 0.1\\
Recovery Phase  & $4\mathrm{e}{-4}$ & 1  & 32 & $1\mathrm{e}{-4}$ & 0.1\\
\bottomrule
\end{tabular}
\caption{Training protocol for the GFOES framework.}
\label{tab:GFOES}
\end{table*}

\subsection{Main Convergence Theorem}
The objective defined in Eq.~\eqref{sec:appendix_stable_obj} satisfies the following key properties:

\begin{lemma}[Lower bound]
Under Assumptions A2 and A3, the stabilised objective is strictly bounded below:
\begin{equation}
    \mathcal{J}_t(\phi_t, \lambda_t) \ge \frac{\lambda_t}{\sup_{\phi \in \Phi} G(\phi)} > 0. \nonumber
\end{equation}
\end{lemma}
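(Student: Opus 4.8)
The plan is to bound the two additive terms of $\mathcal{J}_t(\phi_t,\lambda_t) = \lambda_t \frac{1}{G(\phi_t)} + (1-\lambda_t) F(\phi_t)$ separately and then combine them. First I would observe that the retention term is nonnegative: since $F$ is a cross-entropy loss we have $F(\phi_t) \ge 0$, and $1 - \lambda_t > 0$ because $\lambda_t \in (0,1)$, so $(1-\lambda_t)F(\phi_t) \ge 0$. This term can therefore only raise the objective and may be dropped when seeking a lower bound.

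The core of the argument concerns the forgetting term. Assumption A3 guarantees $G(\phi_t) \ge \epsilon > 0$, so $1/G(\phi_t)$ is well defined and positive, and Assumption A2 guarantees $\sup_{\phi \in \Phi} G(\phi) < \infty$. Since $G(\phi_t) \le \sup_{\phi \in \Phi} G(\phi)$ by definition of the supremum, and both quantities are strictly positive, applying the monotonicity of $x \mapsto 1/x$ on the positive reals reverses the inequality to give $\frac{1}{G(\phi_t)} \ge \frac{1}{\sup_{\phi \in \Phi} G(\phi)}$. Multiplying through by $\lambda_t > 0$ then yields $\lambda_t \frac{1}{G(\phi_t)} \ge \frac{\lambda_t}{\sup_{\phi \in \Phi} G(\phi)}$.

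Combining the two bounds gives $\mathcal{J}_t(\phi_t,\lambda_t) \ge \frac{\lambda_t}{\sup_{\phi \in \Phi} G(\phi)}$, which is the claimed inequality. For the strict positivity I would note that $\lambda_t > 0$ follows from $\lambda_t \in (0,1)$, while the denominator is finite by A2 and bounded below by $\epsilon > 0$ by A3; hence the ratio is a strictly positive real number rather than zero or undefined.

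I do not anticipate a genuine obstacle: the statement reduces to the monotonicity of reciprocation on the positive reals together with the two assumptions. The only points that require care are (i) verifying $G(\phi_t) > 0$ before inverting, which A3 supplies, and (ii) confirming the supremum is finite so the bound is not vacuous, which A2 supplies. One bookkeeping subtlety is that the update rule states the projection as $\Pi_{[0,1]}$ whereas the lemma hypothesis is $\lambda_t \in (0,1)$; I would rely on the strict inequality $\lambda_t > 0$ to secure the strict positivity of the bound.
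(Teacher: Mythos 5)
Your proposal is correct and follows essentially the same route as the paper's own (much terser) proof sketch: drop the nonnegative retention term, use A3 to invert $G$ safely, and use A2 to bound $G$ from above so the reciprocal bound is nonvacuous. Your version simply makes the monotonicity-of-reciprocation step and the strict positivity of $\lambda_t$ explicit, which the paper leaves implicit.
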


\begin{proof}[Proof sketch]
Both terms in Eq.~\eqref{sec:appendix_stable_obj} are non-negative. The reciprocal term \(\lambda_t / G(\phi_t)\) is strictly positive due to A3, and the feasible domain ensures boundedness of \(G(\phi)\) from above by A2.
\end{proof}

\begin{lemma}[Gradient norm bound]
Suppose that \(F\) and \(G\) are \(L\)-smooth. Then the gradient norm of the objective with respect to \(\phi\) is bounded as:
\begin{equation}
    \|\nabla_{\phi_t} \mathcal{J}_t(\phi_t)\| 
    \le (1 - \lambda_t) L + \frac{\lambda_t L}{G(\phi_t)^2}. \nonumber
\end{equation}
This bound decreases as \(G(\phi_t)\) increases, mitigating potential gradient explosion.
\end{lemma}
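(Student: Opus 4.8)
The plan is to compute $\nabla_\phi \mathcal{J}_t$ explicitly, split it via the triangle inequality into a forgetting term and a retention term, and bound each first-order gradient. Since the objective $\mathcal{J}_t(\phi_t,\lambda_t) = \lambda_t / G(\phi_t) + (1-\lambda_t) F(\phi_t)$ is a convex combination of $1/G$ and $F$, linearity of differentiation gives
\[
\nabla_{\phi} \mathcal{J}_t = \lambda_t \nabla_\phi\!\left(\tfrac{1}{G(\phi_t)}\right) + (1-\lambda_t)\,\nabla_\phi F(\phi_t),
\]
and the chain rule applied to the reciprocal yields $\nabla_\phi\!\left(1/G\right) = -\,G(\phi_t)^{-2}\,\nabla_\phi G(\phi_t)$, which is well-defined precisely because Assumption A3 guarantees $G(\phi_t) \ge \epsilon > 0$ so the denominator never vanishes.

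Next I would apply the triangle inequality to the two summands, obtaining
\[
\|\nabla_\phi \mathcal{J}_t\| \le \frac{\lambda_t}{G(\phi_t)^2}\,\|\nabla_\phi G(\phi_t)\| + (1-\lambda_t)\,\|\nabla_\phi F(\phi_t)\|.
\]
The stated bound follows immediately once $\|\nabla_\phi G\| \le L$ and $\|\nabla_\phi F\| \le L$ are substituted. The monotonicity remark is then read off directly: for fixed $\lambda_t$ the map $G \mapsto \lambda_t L / G^2$ is strictly decreasing on $(0,\infty)$, so a larger forgetting loss shrinks the reciprocal term and hence the whole bound, which is the intended interpretation that the objective suppresses gradient explosion as $G(\phi_t)$ grows.

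The main obstacle, and the step I would scrutinize most, is justifying $\|\nabla_\phi F\|,\|\nabla_\phi G\| \le L$. Assumption A1 as stated posits $L$-\emph{smoothness}, i.e.\ Lipschitz continuity of the gradients, which controls the Hessian rather than the gradient magnitude, and so does not by itself yield a uniform bound of $L$ on $\|\nabla F\|$ or $\|\nabla G\|$. I would close this gap in one of two ways: either (i) reinterpret/strengthen the hypothesis so that $F$ and $G$ are $L$-Lipschitz in value (equivalently, their gradients are bounded by $L$), which is the cleanest route and is consistent with the single constant $L$ appearing in the claimed bound; or (ii) invoke A2 (compactness of $\Phi$) together with continuity of the gradients to conclude that $\sup_{\phi\in\Phi}\|\nabla F(\phi)\|$ and $\sup_{\phi\in\Phi}\|\nabla G(\phi)\|$ are finite, and denote their common upper bound by $L$. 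Either route reduces the remaining work to the routine algebra above; the only genuine care required is to state precisely which boundedness hypothesis on the first-order gradients is in force, since the smoothness constant and the gradient-magnitude bound are conceptually distinct quantities.
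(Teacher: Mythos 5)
Your proposal follows essentially the same route as the paper's own proof: differentiate $\mathcal{J}_t$ in $\phi$, use the chain rule on the reciprocal term to get $\nabla_\phi(1/G) = -G^{-2}\nabla_\phi G$, apply the triangle inequality, and then substitute $\|\nabla_\phi F\|\le L$ and $\|\nabla_\phi G\|\le L$. The one place you diverge is the place you rightly flag: the paper justifies $\|\nabla_\phi F\|, \|\nabla_\phi G\|\le L$ by asserting that $L$-smoothness makes ``their gradients Lipschitz and satisfy $\|\nabla_\phi F\|\le L$,'' which is exactly the conflation you identify --- A1 as stated bounds the \emph{variation} of the gradient, not its magnitude (a linear function is $0$-smooth with arbitrarily large gradient). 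So your scrutiny is warranted and the gap is in the paper, not in your argument. Of your two repairs, route (ii) is the one consistent with the stated assumptions: A2 (compactness of $\Phi$) together with A1 gives $\|\nabla h(\phi)\|\le \|\nabla h(\phi_0)\|+L\,\mathrm{diam}(\Phi)$ for any reference point $\phi_0$, so the gradients are uniformly bounded by some finite constant --- though not necessarily by the smoothness constant $L$ itself, so the bound in the lemma should really be stated with that (relabelled) constant. Route (i), assuming $L$-Lipschitzness of $F$ and $G$ in value, recovers the lemma verbatim but strengthens the hypotheses beyond what A1--A3 provide. Either way your algebra and the monotonicity remark are correct.
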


\begin{proof}[Proof sketch]
Differentiating Eq.~\eqref{sec:appendix_stable_obj} with respect to \(\phi_t\) yields:
\[
\nabla_{\phi_t} \mathcal{J}_t(\phi_t) = (1 - \lambda_t) \nabla_\phi F(\phi_t) - \lambda_t \cdot \frac{\nabla_\phi G(\phi_t)}{G(\phi_t)^2}.
\]
Taking norms and applying the triangle inequality:
\[
\|\nabla_{\phi_t} \mathcal{J}_t\| \leq (1 - \lambda_t) \|\nabla_\phi F\| + \frac{\lambda_t}{G(\phi_t)^2} \|\nabla_\phi G\|.
\]
If both \(F\) and \(G\) are \(L\)-smooth, then their gradients are Lipschitz and satisfy:
\[
\|\nabla_\phi F\| \le L, \quad \|\nabla_\phi G\| \le L.
\]
Substituting gives:
\[
\|\nabla_{\phi_t} \mathcal{J}_t\| \le (1 - \lambda_t) L + \frac{\lambda_t L}{G(\phi_t)^2}.
\]
As \(G(\phi_t)\) increases, the second term decreases, which controls the gradient magnitude and stabilises updates.
\end{proof}

We now rigorously state and prove the convergence theorem for the stabilised GFN training process.

\begin{theorem}[Convergence of GFN Training]
\label{thm:convergence}
Consider the training of the Generative Feedback Network (GFN) with the stabilised objective defined in Eq.~\eqref{sec:appendix_stable_obj}, where the parameters \((\phi_t, \lambda_t)\) are updated via gradient descent in \(\phi\) and projected gradient ascent in \(\lambda\). Under Assumptions A1-A3, and with sufficiently small step sizes \(\eta_\phi, \eta_\lambda < 1/L\), the sequence of iterates \((\phi_t, \lambda_t)\) converges to a first-order stationary point. Specifically,
\[
\min_{1 \le t \le T} \left\| \nabla_\phi \mathcal{J}_t(\phi_t, \lambda_t) \right\|^2 
+ \left\| \nabla_\lambda \mathcal{J}_t(\phi_t, \lambda_t) \right\|^2 
\le \mathcal{O}\left(\frac{1}{T}\right).
\]
\end{theorem}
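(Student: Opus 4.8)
The plan is to treat the update as simultaneous gradient descent--ascent on the saddle function $\mathcal{J}(\phi,\lambda)=\lambda/G(\phi)+(1-\lambda)F(\phi)$, which is \emph{nonconvex in $\phi$ but affine in $\lambda$} over the compact set $[0,1]$, and to extract the $\mathcal{O}(1/T)$ rate from a descent-lemma telescoping argument applied to a suitable potential. Before any iteration analysis, I would first upgrade Assumptions A1--A3 into an effective joint-smoothness constant for $\mathcal{J}$. Writing $\Psi(\phi)=1/G(\phi)$, the chain rule gives $\nabla_\phi\Psi=-\nabla_\phi G/G^2$ and a Hessian whose norm is controlled by $\|\nabla^2 G\|/G^2+2\|\nabla G\|^2/G^3$; using $G\ge\epsilon$ from A3, the $L$-smoothness of A1, and the fact that $\|\nabla G\|$ is bounded on the compact domain $\Phi$ by A2, this yields a finite constant $\tilde L$ such that $\Psi$, and hence $\nabla_\phi\mathcal{J}$, is $\tilde L$-Lipschitz. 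I would also record that $\nabla_\lambda\mathcal{J}=\Psi(\phi)-F(\phi)$ is $\tilde L$-Lipschitz in $\phi$; these facts are exactly what the descent lemma needs and they refine the gradient-norm-bound lemma already proved.

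The core of the argument decomposes one step of the potential $V_t=\mathcal{J}(\phi_t,\lambda_t)$ into a $\phi$-move and a $\lambda$-move:
\begin{equation}
V_{t+1}-V_t=\big[\mathcal{J}(\phi_{t+1},\lambda_t)-\mathcal{J}(\phi_t,\lambda_t)\big]+\big[\mathcal{J}(\phi_{t+1},\lambda_{t+1})-\mathcal{J}(\phi_{t+1},\lambda_t)\big].\nonumber
\end{equation}
For the first bracket I would apply the descent lemma to the $\tilde L$-smooth map $\phi\mapsto\mathcal{J}(\phi,\lambda_t)$: with $\eta_\phi<1/\tilde L$ this contributes at most $-c_\phi\|\nabla_\phi\mathcal{J}_t\|^2$ with $c_\phi=\eta_\phi(1-\tilde L\eta_\phi/2)>0$. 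For the second bracket, affineness in $\lambda$ makes it exactly $(\lambda_{t+1}-\lambda_t)\,\nabla_\lambda\mathcal{J}(\phi_{t+1},\cdot)$, and the standard projected-gradient-ascent inequality relates $\lambda_{t+1}-\lambda_t$ to the squared gradient mapping $\|\mathcal{G}_\lambda(t)\|^2$, which coincides with $\|\nabla_\lambda\mathcal{J}_t\|^2$ whenever the iterate is interior (the fixed point $\lambda^*\in(0,1)$ of interest is interior). Summing over $t=1,\dots,T$, telescoping $V_{t+1}-V_t$, and using that $V_t$ is bounded below by the lower-bound lemma and above by $\max_\Phi F+1/\epsilon$ (finite by A2--A3) would turn the accumulated squared-gradient terms into an $\mathcal{O}(1/T)$ bound on $\min_{1\le t\le T}(\|\nabla_\phi\mathcal{J}_t\|^2+\|\nabla_\lambda\mathcal{J}_t\|^2)$ after dividing by $T$.

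The hard part will be the opposing signs of the two moves: the $\lambda$-ascent step \emph{increases} $\mathcal{J}$, so $V_t$ is not monotone and the second bracket cannot simply be discarded. I would control it in one of two ways. The cleanest is a mild two-timescale choice letting $\lambda$ track its closed-form optimiser --- since the inner maximisation over $[0,1]$ of an affine function is attained at an endpoint or an interior stationary point, the envelope $P(\phi)=\max_\lambda\mathcal{J}(\phi,\lambda)=\max(F(\phi),1/G(\phi))$ is piecewise $\tilde L$-smooth, and running descent against $P$ restores monotonicity. Alternatively, staying single-timescale, I would bound the cross term arising from the simultaneous (rather than alternating) update using the $\tilde L$-Lipschitzness of $\nabla_\lambda\mathcal{J}$ in $\phi$ together with a Young's inequality, so that it is absorbed into the negative $\phi$-descent term once $\eta_\phi,\eta_\lambda<1/\tilde L$ are small enough, while the compactness of $[0,1]$ keeps the total dual drift finite. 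Either route carries the $\mathcal{O}(1/T)$ rate, and I would also note that the stationarity measure at the boundary should formally be the projected gradient mapping, reducing to the raw $\nabla_\lambda\mathcal{J}$ exactly at the interior equilibrium $1/G(\phi^*)=F(\phi^*)$ highlighted in the main text.
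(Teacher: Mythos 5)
Your skeleton is essentially the paper's: apply the descent lemma to the $\tilde L$-smooth map $\phi\mapsto\mathcal{J}(\phi,\lambda)$, exploit affineness (hence concavity) in $\lambda$ for the projected ascent step, and telescope the potential $V_t=\mathcal{J}(\phi_t,\lambda_t)$ using the two-sided boundedness of $\mathcal{J}$ supplied by the lower-bound and gradient-norm lemmas together with A2--A3. Your preliminary step of deriving an explicit joint-smoothness constant for $\Psi(\phi)=1/G(\phi)$ from $G\ge\epsilon$, $L$-smoothness, and compactness is a genuine improvement over the paper, which invokes the descent lemma for $\mathcal{J}$ while only assuming smoothness of $F$ and $G$ themselves; smoothness of the reciprocal term has to be established exactly as you do.

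The substantive difference is that you explicitly name the obstruction the paper's proof passes over. The paper adds the descent inequality $\mathcal{J}(\phi_{t+1},\lambda_t)\le\mathcal{J}(\phi_t,\lambda_t)-\tfrac{\eta_\phi}{2}\|\nabla_\phi\mathcal{J}_t\|^2$ to the ascent inequality $\mathcal{J}(\phi_{t+1},\lambda_{t+1})\ge\mathcal{J}(\phi_{t+1},\lambda_t)+\tfrac{\eta_\lambda}{2}\|\nabla_\lambda\mathcal{J}_t\|^2$ and asserts that the result telescopes to $\mathcal{J}(\phi_1,\lambda_T)-\mathcal{J}(\phi_{T+1},\lambda_1)$; it does not, since the per-step combination is $\mathcal{J}(\phi_t,\lambda_t)+\mathcal{J}(\phi_{t+1},\lambda_{t+1})-2\mathcal{J}(\phi_{t+1},\lambda_t)$ --- precisely the non-monotone ``second bracket'' you isolate. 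However, neither of your proposed repairs delivers the claimed conclusion. The envelope route replaces the algorithm actually analysed (simultaneous GDA with a projected $\lambda$-step) by descent on $P(\phi)=\max(F(\phi),1/G(\phi))$, which is nonsmooth exactly at the equilibrium $F=1/G$ of interest, so the stationarity measure must become a Moreau-envelope or Clarke-subgradient quantity rather than the joint gradient norm appearing in the theorem. The single-timescale route also falls short: $|\lambda_{t+1}-\lambda_t|\le\eta_\lambda\,|1/G(\phi)-F(\phi)|$ need not be small away from equilibrium, so the accumulated dual drift is $O(T\eta_\lambda)$ with constant step sizes, and after dividing by $T$ this leaves an $O(\eta_\lambda)$ floor rather than an $O(1/T)$ bound. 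In short, your write-up is a more careful and more honest rendition of the paper's own argument, but the bound $\min_{1\le t\le T}\bigl(\|\nabla_\phi\mathcal{J}_t\|^2+\|\nabla_\lambda\mathcal{J}_t\|^2\bigr)\le\mathcal{O}(1/T)$ remains unestablished on either route; a correct statement would require diminishing or two-timescale step sizes, or a reformulation of the conclusion in terms of the primal stationarity measure standard for nonconvex--concave minimax problems.
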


\begin{proof}
The optimisation problem defined by \(\mathcal{J}_t(\phi, \lambda)\) is a nonconvex-concave min–max problem. Due to neural network parameterisation and the reciprocal term \(1 / G(\phi)\), the objective is nonconvex in \(\phi\). Under Assumptions A1 and A3, where \(G(\phi) \ge \epsilon > 0\), the reciprocal term remains continuously differentiable and smooth. Thus, \(\mathcal{J}_t\) is a well-behaved, nonconvex smooth function in \(\phi\).

For fixed \(\phi\), the objective can be rewritten as
\[
\mathcal{J}_t(\phi, \lambda) = \lambda \left( \frac{1}{G(\phi)} - F(\phi) \right) + F(\phi),
\]
which is affine in \(\lambda\), ensuring concavity. Hence, the overall objective is a nonconvex–concave minimax problem.

From Lemma~2, we have the following uniform gradient bound:
\[
\|\nabla_\phi \mathcal{J}_t(\phi, \lambda)\| \le (1 - \lambda_t)L + \frac{\lambda_t L}{G(\phi)^2} \le L + \frac{L}{\epsilon^2},
\]
since \(G(\phi) \ge \epsilon > 0\). Together with Assumption A2, which ensures that \(\phi_t\) remains in a compact domain, this implies that the iterates \((\phi_t, \lambda_t)\) remain bounded and the training process is stable.

To quantify convergence, we define a Lyapunov-like scalar function:
\[
V(\phi, \lambda) = \frac{1}{2} \left( \|\nabla_\phi \mathcal{J}_t(\phi, \lambda)\|^2 + \|\nabla_\lambda \mathcal{J}_t(\phi, \lambda)\|^2 \right),
\]
which measures proximity to a stationary point. If \(V(\phi, \lambda) = 0\), then \((\phi, \lambda)\) is a first-order stationary point.

We now analyse the optimisation dynamics. From Assumption A1 (smoothness), the descent lemma implies that for \(\eta_\phi < 1 / L\), the gradient descent step in \(\phi\) satisfies:
\[
\mathcal{J}_t(\phi_{t+1}, \lambda_t) \le \mathcal{J}_t(\phi_t, \lambda_t) - \frac{\eta_\phi}{2} \|\nabla_\phi \mathcal{J}_t(\phi_t, \lambda_t)\|^2.
\]

Since \(\mathcal{J}_t\) is concave in \(\lambda\) and the domain \([0,1]\) is convex, the projected gradient ascent step ensures:
\[
\mathcal{J}_t(\phi_{t+1}, \lambda_{t+1}) \ge \mathcal{J}_t(\phi_{t+1}, \lambda_t) + \frac{\eta_\lambda}{2} \|\nabla_\lambda \mathcal{J}_t(\phi_{t+1}, \lambda_t)\|^2.
\]

Combining the two inequalities and summing over \(t = 1, \dots, T\), we obtain:
\begin{multline*}
\sum_{t=1}^T \left( \eta_\phi \|\nabla_\phi \mathcal{J}_t(\phi_t, \lambda_t)\|^2 
+ \eta_\lambda \|\nabla_\lambda \mathcal{J}_t(\phi_t, \lambda_t)\|^2 \right) \\
\le \mathcal{J}_t(\phi_1, \lambda_T) - \mathcal{J}_t(\phi_{T+1}, \lambda_1) \le M,
\end{multline*}
for some constant \(M > 0\), due to the boundedness of the objective.

Dividing both sides by \(T\) and applying the minimum inequality yields:
\begin{equation}
\begin{split}
\min_{1 \le t \le T} \left( \|\nabla_\phi \mathcal{J}_t(\phi_t, \lambda_t)\|^2 
+ \|\nabla_\lambda \mathcal{J}_t(\phi_t, \lambda_t)\|^2 \right) \\
\le \frac{M}{T \cdot \min(\eta_\phi, \eta_\lambda)} = \mathcal{O}\left(\frac{1}{T}\right).
\end{split}
\end{equation}

This result explicitly guarantees that the joint gradient norm converges to zero at a sublinear rate of \(\mathcal{O}(1/T)\). As \(T \to \infty\), the iterates \((\phi_t, \lambda_t)\) converge to a first-order stationary point. Hence, the training dynamics of the GFN under the stabilised objective are provably convergent.
\end{proof}

\section{Experimental Settings}
\label{appendix:exp-setup}

This section provides additional details regarding the datasets, model architectures, and training protocols. All experiments are implemented using PyTorch 2.4.1 and conducted on a single NVIDIA RTX 3090 GPU with 24\,GB of memory.

\subsection{Datasets}

We evaluate GFOES on three standard image classification benchmarks:

\begin{itemize}
    \item \textbf{Fashion-MNIST}: Consists of 60{,}000 grey-scale images (28\(\times\)28) across 10 categories. We randomly designate 1 or 4 classes as forget classes.
    
    \item \textbf{CIFAR-10}: Contains 60{,}000 RGB images (32\(\times\)32) across 10 object classes. In our setting, 1 or 4 classes are selected as forget classes.
    
    \item \textbf{CIFAR-100}: Comprises 60{,}000 RGB images (32\(\times\)32) from 100 fine-grained categories. We evaluate configurations with 1 or 10 forget classes.
\end{itemize}

For all datasets, the retained set is down-sampled to 5\%, 10\%, or 20\% of the original per-class instances to ensure class balance. The forget set remains entirely inaccessible during the unlearning process.

\begin{figure}[t]
\centering
\includegraphics[width=0.48\textwidth]{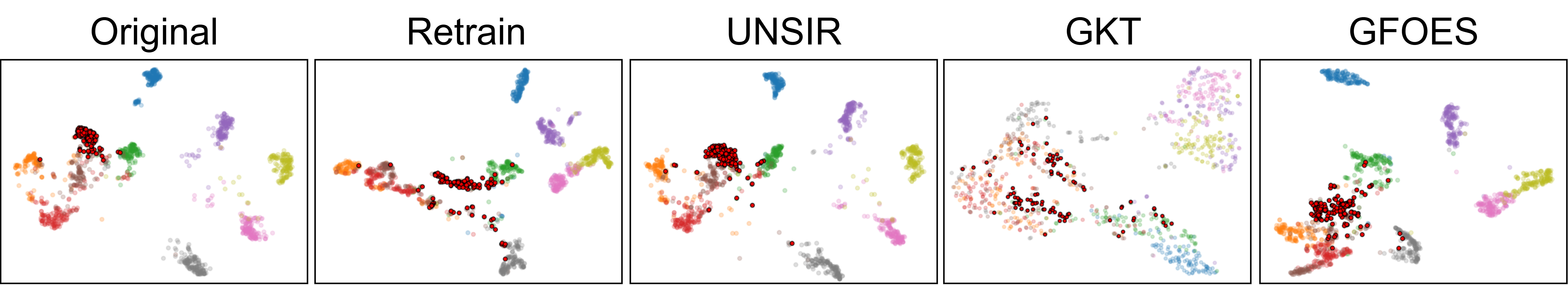}
\caption{t-SNE visualisation for single-class unlearning on Fashion-MNIST. GFOES disperses feature clusters associated with forgotten classes while preserving well-separated clusters for retained classes.}
\label{add-tsne}
\end{figure}

\subsection{Model Architectures}

We consider three widely used model architectures:

\begin{itemize}
    \item \textbf{AllCNN}: A fully convolutional network comprising 9 layers with ReLU activations. The number of channels increases from 96 (in the first three layers) to 192 (in the next three), followed by class-specific output layers. Downsampling is performed via stride-2 convolutions, and global features are aggregated using adaptive average pooling.
    
    \item \textbf{ResNet-18}: A lightweight residual network adapted for CIFAR inputs. The initial convolution uses a \(3 \times 3\) kernel (stride 1, without max pooling). The final fully connected layer is replaced with a dataset-specific linear classifier.
    
    \item \textbf{ResNet-50}: A deeper residual network initialised with \texttt{IMAGENET1K\_V2} pretrained weights. The initial stage mirrors ResNet-18, and the final layer is modified to output 100 logits for CIFAR-100.
\end{itemize}

\subsection{Training Protocols}

We standardise the training setup across all models and methods to enable fair comparison. Table~\ref{tab:org} summarises the training configurations used for the original models. For fairness, retraining procedures for all baseline methods follow identical settings. Table~\ref{tab:GFOES} details the hyperparameter settings for the GFOES framework. The erasure phase adopts a relatively large learning rate to aggressively remove the target knowledge, followed by a smaller rate in the recovery phase for stabilised fine-tuning. All baseline-specific hyperparameters are adopted directly from their original publications to ensure a fair comparison.

\begin{figure}[t]
\centering
\includegraphics[width=0.48\textwidth]{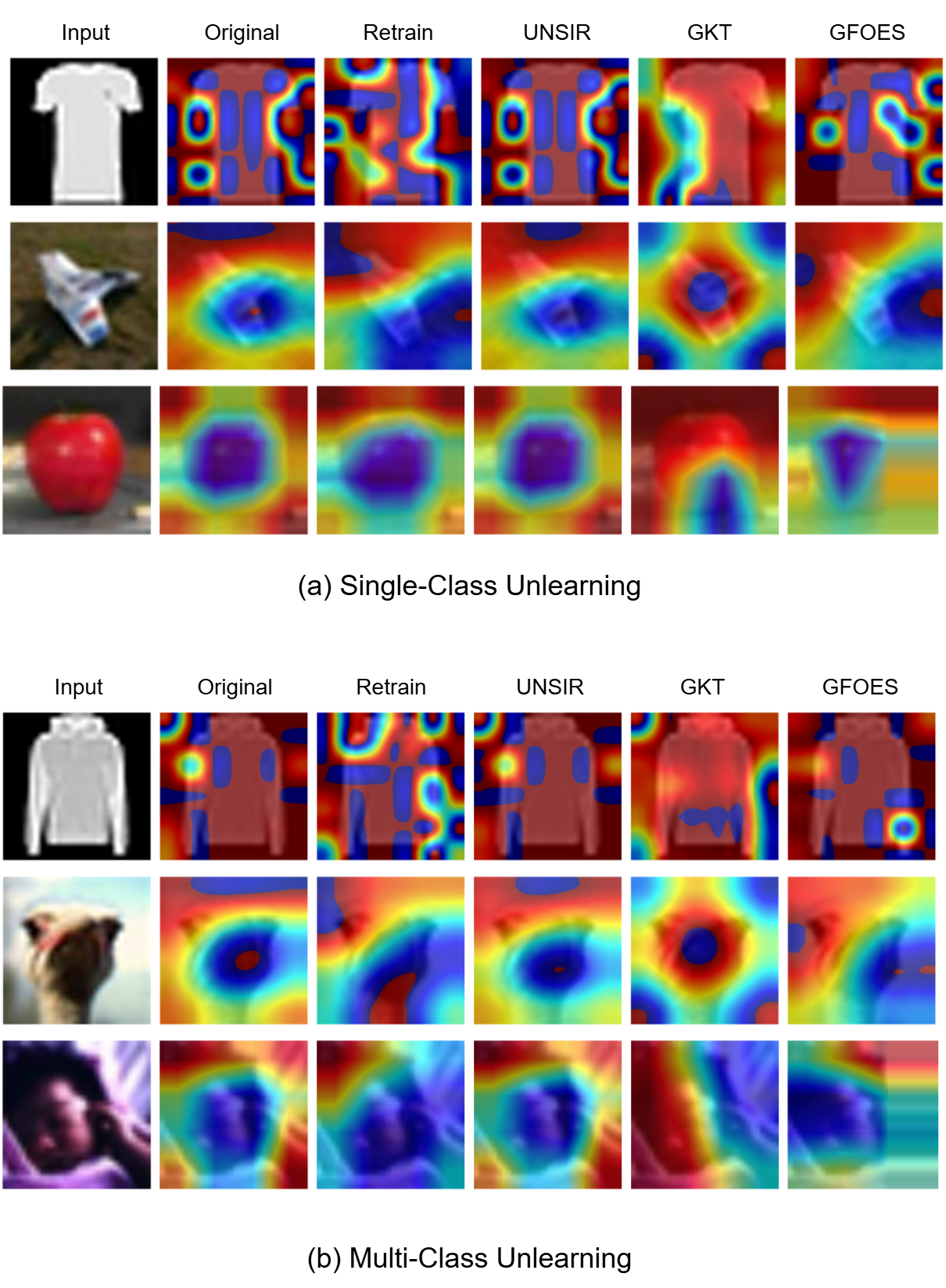}
\caption{GradCAM heatmaps. (a) Single-class unlearning results on Fashion-MNIST, CIFAR-10, and CIFAR-100; (b) Multi-class unlearning. Warmer colours indicate higher contributions to classification. GFOES eliminates reliance on forgotten-class regions while preserving attention on retained-class areas.}
\label{add-gradcam}
\end{figure}

\section{Additional Experimental Results}

\subsection{Feature Visualisation (Extended)}

In Section~4.3, we present feature visualisation results for single-class unlearning on CIFAR-10. Here, we provide extended results across multiple datasets and unlearning configurations. As shown in Figure~\ref{add-tsne} and Figure~\ref{add-gradcam}, GFOES consistently disrupts the feature structure of forgotten classes while maintaining coherent clusters for retained classes. GradCAM visualisations further confirm that GFOES significantly alters attention regions associated with forgotten classes, reinforcing its ability to remove feature-level knowledge without adversely affecting the representation of retained classes.

\subsection{Multi-class Ablation Study}

Table~\ref{add_ab} presents ablation results for multi-class unlearning. Consistent with the single-class findings, only the full GFOES configuration achieves \textit{complete forgetting} (\(\mathcal{AD}_f = 0\)) while maintaining competitive performance on \(\mathcal{AD}_r\). Other variants either underperform in forgetting or fail to preserve utility, confirming the necessity of both the OES mechanism and the two-stage learning rate schedule.

\begin{table*}[t]
\centering
\begin{tabular}{l|cc|cc|cc}
\toprule
\multirow{2}{*}{Setting} &
\multicolumn{2}{c}{Fashion} & \multicolumn{2}{c}{CIFAR-10} & \multicolumn{2}{c}{CIFAR-100} \\
\cmidrule(lr){2-3}\cmidrule(lr){4-5}\cmidrule(lr){6-7}
& $\mathcal{AD}_f\downarrow$ & $\mathcal{AD}_r\uparrow$ 
& $\mathcal{AD}_f\downarrow$ & $\mathcal{AD}_r\uparrow$ 
& $\mathcal{AD}_f\downarrow$ & $\mathcal{AD}_r\uparrow$ \\
\midrule
GFOES & \textbf{0.00} & \textbf{89.12} & \textbf{0.00} & \textbf{87.93} & \textbf{0.00} & \textbf{69.26}\\
~OES+$\text{D}_r$+$\text{R}_l$  & \textbf{0.00} & 58.74 & \textbf{0.00} & 60.81 & \textbf{0.00} & 38.29 \\
~OES+$\text{D}_r$+$\text{R}_s$  & 29.82 & 91.32 & 24.54 & 86.51 & 25.63 & 71.77 \\
~OES+$\text{R}_{ls}$ & \textbf{0.00} & 61.28 & \textbf{0.00} & 55.60 & \textbf{0.00} & 37.38 \\
~OES+$\text{R}_l$   & \textbf{0.00} & 42.83 & \textbf{0.00} & 40.34 & \textbf{0.00} & 30.92 \\
~OES+$\text{R}_s$   & 41.55 & 78.95 & 32.63 & 71.39 & 25.02 & 62.91 \\
~$\text{D}_r$+$\text{R}_{ls}$ & 24.26 & 76.90 & 17.73 & 81.26 & 18.92 & 51.29 \\
~$\text{D}_r$+$\text{R}_l$    & 12.39 & 73.12 & 7.90  & 68.66 & 8.99  & 52.89 \\
~$\text{D}_r$+$\text{R}_s$    & 42.74 & \textbf{92.97} & 38.70 & \textbf{88.82} & 34.99 & 70.05 \\
\bottomrule
\end{tabular}
\caption{ Ablation results for multi-class unlearning across three datasets. Each setting combines data composition ({OES} or $\text{D}_{\text{r}}$) and learning rate strategies ($\text{R}_{\text{ls}}$, $\text{R}_{\text{l}}$, or $\text{R}_{\text{s}}$). The best results are shown in \textbf{bold}.
}
\label{add_ab}
\end{table*}

\subsection{Weight Distance Analysis}
We conduct this analysis for both single-class and multi-class unlearning on three datasets. GFOES is compared with GKT and UNSIR, all achieving $A\mathcal{D}_f = 0$. To assess internal changes, we compute the L2 distance between the weights of the unlearned and original models, separating each into the feature extractor and final layer. A larger distance indicates greater deviation and stronger forgetting. As shown in Figure~\ref{weight1}, all methods significantly alter the final layer, disrupting classification. However, only GFOES and GKT also modify the feature extractor, indicating deeper removal of class-related knowledge. In contrast, UNSIR preserves this component, maintaining utility but failing to erase latent representations. These results suggest more comprehensive forgetting across both representation and prediction layers.

% Weight changes indicate how deeply a model's learned knowledge is modified. We compute L2 distances between unlearned and original models for feature extraction layers and the final layer. All methods achieving $\mathcal{AD}_f{=}0$ (GFOES, GKT, UNSIR) significantly alter the final layer, but only GFOES and GKT also perturb feature extractors, indicating true representation-level forgetting. UNSIR leaves feature extractors nearly unchanged, meaning latent representations of forgotten classes remain intact.

\begin{figure}[t]
\centering
\includegraphics[width=0.45\textwidth]{}
\caption{L2 distance between original and unlearned models. (a–b) Single-class unlearning; (c–d) multi-class unlearning. (a, c) Feature extractor; (b, d) final classification layer. GFOES and GKT induce deeper changes at the feature level, whereas UNSIR primarily affects the classification head.}
\label{weight1}
\end{figure}

\subsection{Additional Time Efficiency Analysis}

Section~4.4 reports the wall-clock time for GFOES and baseline methods, demonstrating the superior efficiency of our approach. Here, we provide additional insights into the time cost breakdown and the design rationale underlying GFOES. As shown in Table~\ref{time_cost_analysis_results}, the majority of GFOES's time overhead arises from training the Generative Feedback Network (GFN). At first glance, one might expect that training a generator from scratch would be computationally expensive. However, this is not the case for our design.

The GFN in GFOES fundamentally differs from conventional generative models for realistic image synthesis. Instead of generating visually plausible samples, the GFN optimises a stabilised joint objective to produce pseudo-samples that (i) induce high loss on forgotten classes and (ii) minimise loss on retained classes under the unlearned model. In effect, it learns a structured noise distribution specifically tailored to erase target knowledge while preserving utility. This task-oriented formulation avoids the heavy computational demands typical of full image generation, such as those used in GANs or diffusion models.

As a result, GFOES achieves a favourable trade-off. While GFN training contributes the largest share of computation within GFOES, the total runtime remains an order of magnitude lower than that of retraining or distillation-based zero-shot methods such as GKT. These findings demonstrate that the additional generative step introduces minimal overhead while delivering substantial gains in utility preservation and unlearning quality.

% The main time consumption in GFOES comes from training the Generative Feedback Network (GFN), but GFN training is highly efficient, which contributes to the overall time efficiency of GFOES. The efficiency of GFN is primarily attributed to the design of the stable joint objective component. As shown in Fig.~\ref{gfn_loss}, the Generative Feedback Network (GFN) demonstrates efficient convergence, typically stabilizing within just 20 training epochs. In contrast to traditional generators that focus on generating visually high-quality images, which often lead to high computational costs, GFN solely focuses on the impact of generated samples on the model, thereby reducing the computational load.

% \begin{figure}[h]
% \centering
% \includegraphics[width=0.48\textwidth]{GFN_LOSS.png}
% \caption{Training loss curve of GFN. The training can converge in around 20 epochs.}
% \label{gfn_loss}
% \end{figure}

\end{document}